\newcommand{\akash}[1]{{\textcolor{purple}{[{\bf A:} #1]}}}
\newcommand{\rahul}[1]{{\textcolor{magenta}{[{\bf R:} #1]}}}
\newcommand{\parthe}[1]{{\textcolor{red}{[{\bf A:} #1]}}}
\newcommand{\annotate}[1]{{\textcolor{blue}{[{\bf Note:} #1]}}}
\newcommand{\akash}[1]{}
\newcommand{\rahul}[1]{}
\newcommand{\parthe}[1]{}
\newcommand{\annotate}[1]{}
\newtheorem{assumption}{Assumption}
\newcommand{\Rmnum}[1]{\expandafter\@slowromancap\romannumeral #1@}
\newcommand{\exmref}[1]{Example~\ref{#1}}
\newcommand{\figref}[1]{Fig.~\ref{#1}}
\newcommand{\eqnref}[1]{\text{Eq.}~(\ref{#1})}
\newcommand{\secref}[1]{\textnormal{Section}~\ref{#1}}
\newcommand{\appref}[1]{Appendix \ref{#1}}
\newcommand{\thmref}[1]{Theorem~\ref{#1}}
\newcommand{\lemref}[1]{Lemma~\ref{#1}}
\newcommand{\assref}[1]{Assumption~\ref{#1}}
\newcommand{\rtv}[1]{\mathcal{R}\mathrm{TV}^2({#1})}
\newcommand{\rbv}[1]{\mathcal{R}\mathrm{BV}^2({#1})}
\newcommand{\paren} [1] {\ensuremath{ \left( {#1} \right) }}
\newcommand{\bracket}[1]{\left[#1\right]}
\newcommand{\curlybracket}[1]{\ensuremath{\left\{#1\right\}}}
\newcommand{\vol}[1]{\sf{vol}({#1})}
\newcommand{\symmp}{\text{Sym}_{+}(\reals^{d\times d})}
\newcommand{\reals}{\ensuremath{\mathbb{R}}}
\newcommand{\cR}{{\mathcal{R}}}
\newcommand{\cD}{{\mathcal{D}}}
\newcommand{\cK}{{\mathcal{K}}}
\newcommand{\cX}{{\mathcal{X}}}
\newcommand{\cH}{{\mathcal{H}}}
\newcommand{\cC}{{\mathcal{C}}}
\newcommand{\bL}{{\mathbf{L}}}
\newcommand{\bM}{{\mathbf{M}}}
\renewcommand{\tt}[1]{\textit{#1}}
\renewcommand{\sf}[1]{\textsf{#1}}
\def\BState{\State\hskip-\ALG@thistlm}
\DeclareMathOperator*{\esssup}{ess\,sup}
\newcommand{\abs}[1]{\left|#1\right|}
\newcommand{\norm}[1]{\left\|#1\right\|}
\def\x{\bm{x}}
\title[A Gap Between the Gaussian RKHS and Neural Networks]{A Gap Between the Gaussian RKHS and Neural Networks: \\ An Infinite-Center Asymptotic Analysis}
\begin{document}

\maketitle
\begin{abstract}

Recent works have characterized the function-space inductive bias of infinite-width bounded-norm single-hidden-layer neural networks as a kind of bounded-variation-type space. This novel neural network Banach space encompasses many classical multivariate function spaces, including certain Sobolev spaces and the spectral Barron spaces. Notably, this Banach space also includes functions that exhibit less classical regularity, such as those that only vary in a few directions. On bounded domains, it is well-established that the Gaussian reproducing kernel Hilbert space (RKHS) strictly embeds into this Banach space, demonstrating a clear gap between the Gaussian RKHS and the neural network Banach space. It turns out that when investigating these spaces on unbounded domains, e.g., all of $\mathbb{R}^d$, the story is fundamentally different. We establish the following fundamental result: Certain functions that lie in the Gaussian RKHS have infinite norm in the neural network Banach space. This provides a nontrivial gap between kernel methods and neural networks by exhibiting functions that kernel methods easily represent, whereas neural networks cannot.
\end{abstract}

\section{Introduction}
In supervised learning, we observe samples with corresponding labels, which may represent classes or continuous values. Our primary objective is to construct a function \(f: \mathbb{R}^d \to \mathbb{R}\) based on these observations that can accurately predict labels for new, unseen data points. Traditionally, reproducing kernel Hilbert spaces (RKHS) have provided a principled framework for this task, offering both theoretical guarantees and practical algorithms. Their power stems from the representer theorem, which ensures that optimal solutions can be expressed as combinations of kernel functions centered at the training points.

However, the landscape of machine learning has evolved significantly with the emergence of neural networks, which have demonstrated remarkable success across diverse applications over kernel methods. The simplest neural architecture—the single-hidden layer network—builds upon the concept of ridge functions, which map \(\mathbb{R}^d \to \mathbb{R}\) via the form $\bm{x} \mapsto \sigma(\bm{w}^\top \bm{x})$, where \(\sigma: \mathbb{R} \to \mathbb{R}\) is a univariate function and \(\bm{w} \in \mathbb{R}^d \setminus \{\bm{0}\}\). In practice, these networks combine multiple ridge functions:
\begin{equation}
\bm{x} \mapsto \sum_{k=1}^K v_k\, \sigma(\bm{w}_k^\mathsf{T} \bm{x} - b_k),
\end{equation}
where \(K\) represents the network width, \(v_k \in \mathbb{R}\) and \(\bm{w}_k \in \mathbb{R}^d \setminus \{\bm{0}\}\) are weights, and \(b_k \in \mathbb{R}\) are biases. While RKHS methods suffer from the curse of dimensionality, neural networks can overcome it by learning effective low-dimensional representations~\citep{ghorbani_curse, luxberg_curse}. 

A fundamental question is to compare the approximation capabilities of neural networks with those of RKHS corresponding to different kernels. For example, \cite{mei_rate} showed that if the target function is a single neuron, neural networks can learn efficiently using roughly \(d \log d\) samples, whereas the corresponding RKHS requires a sample size that grows polynomially in the dimension \(d\) (see also~\citet{yehudai2019power, ghorbani2019limitations}).

Recent work~\citep{Parhi2020BanachSR,near_mini} has studied the Banach-space optimality of single-hidden-layer (shallow ReLU) networks over both bounded and unbounded domains \(\Omega \subseteq \reals^d\).  There, the authors established a representer theorem which demonstrates that solutions to data-fitting problems in these networks naturally reside in a kind of bounded variation space, referred to as the second-order Radon bounded variation space \(\rbv{\Omega}\). These spaces, in turn, contain several classical multivariate function spaces, including certain Sobolev spaces as well as certain spectral Barron spaces~\citep{barron_universal}. For instance, \citet{near_mini} have shown that the Sobolev space \(H^{d+1}(\Omega)\) embeds into \(\rbv{\Omega}\) for any bounded Lipschitz domain $\Omega \subset \reals^d$. Moreover, on any bounded Lipschitz domain \(\Omega \subset \reals^d\), the Gaussian reproducing kernel Hilbert space \(\cH^{\sf{Gauss}}(\Omega)\) is known to embed into the Sobolev space \(H^{s}(\Omega)\) for all \(s> 0\) (see Corollary 4.36 of~\citet{steinwart_svm}). This observation appears to highlight limitations of Gaussian kernel machines when compared to neural networks on \emph{bounded domains}. Consequently, a natural question arises.
\begin{center}
    \itshape Are Gaussian kernel machines restrictive in approximating general functions?
\end{center}
Conversely, one may also ask the following question.
\begin{center}
    \itshape To what extent can we approximate functions using shallow neural networks?
\end{center}

To that end, \cite{Ghorbani_2021} demonstrated that the gap between neural network approximations and kernel methods can be narrowed when the intrinsic dimensionality of the target function is well captured by the covariates of the data. 
In this paper, we take a different perspective: While the Gaussian RKHS may seem rather limited in a bounded domain, we show that on unbounded domains, in particular, on $\reals^d$ with fixed dimension \(d\), there exist functions in \(\cH^{\sf{Gauss}}(\reals^d)\) with unbounded $\rbv{\reals^d}$-norm. 

The key idea behind our analysis is that, in the regime of kernel machines with infinite centers on $\reals^d$, there exist functions of the form
$f = \sum_{i=1}^\infty \alpha_i k(\bm{x}_i,\cdot)$ with bounded RKHS norm, but the infinite sequence $\{\alpha_i\}$ has an \tt{unbounded} $\ell_1$-norm (see \exmref{exam: divergence} in \secref{sec: setup}). This fact can be exploited to design a sequence of functions $\curlybracket{f_n}$ whose $\rbv{\reals^d}$-norm is diverging as $n \to \infty$ (see \thmref{thm: unbounded} in \secref{sec: diverge}). An important step in this study is we compute an explicit form for $\rbv{\reals^d}$-norm of a Gaussian kernel machine, and further simplify the form using well-known Hermite polynomials. This form provides an interpretable characterization of these kernel machines, which is of independent interest for future studies.

\section{Related Work}

\paragraph{Approximability with Kernel Methods}

\cite{bach_curse} studied various classes of single-/multi-index models with low intrinsic dimension and bounded $\rbv{\reals^d}$-norm. In contrast, \cite{ghorbani2019limitations} showed that if the covariates have the same dimension as the low intrinsic dimension of the target function, kernel and neural network approximations can be competitive. Empirically, some works show that the curse of dimensionality with kernel methods can be handled with an appropriate choice of dataset-specific kernels~\citep{arora2019harnessing, novak2018bayesian, shankar2020neural} or mirroring neural network training dynamics closely to kernel methods~\citep{mean_field_songmei, sirignano2020mean, rotskoff2022trainability, chizat2018global}. Furthermore, \citet{Petrini_2023} showed that compared to a network that learns sparse representations while the target function is constant or smooth along certain directions of the input space, lazy training (via random feature kernel or the NTK) yields better performance. 
But a wide body of work has also shown a gap in approximation with neural networks capturing a richer and more nuanced class of functions compared to kernel methods (see ~\citep{allen2019can,mean_field_songmei, yehudai2019power, ghorbani2019limitations}). In our work, we show that while Gaussian RKHS is embedded within neural networks in bounded domains, in the unbounded regime there exists a non-trivial gap between $\cH^{\sf{Gauss}}(\reals^d)$ and $\rbv{\reals^d}$.

\paragraph{Function Spaces of Shallow Networks} The function space \(\rbv{\Omega}\) naturally characterizes the function approximation and representation capabilities of shallow ReLU neural networks~\citep{Ongie2020A}. \citet{Parhi2020BanachSR} established a \textit{representer theorem}, showing that solutions to variational problems over \(\rbv{\Omega}\) correspond to single-hidden layer ReLU networks with weight decay regularization. Unlike RKHSs
\(\rbv{\Omega}\) can efficiently represent functions with a low-dimensional structure. 
Moreover, neural networks trained with weight decay achieve near-minimax optimal estimation rates for functions in $\rbv{\Omega}$, while kernel methods provably cannot \citep{near_mini}. This suggests that on bounded domains, RKHSs are quite restrictive, while \(\rbv{\Omega}\) provides a more expressive framework.
For further details see \citep{Ongie2020A,Parhi2020BanachSR,Parhi2021WhatKO,near_mini,parhi2023deep,BARTOLUCCI2023194,Parhi2023FunctionSpaceOO}

\paragraph{Embeddings of RKHSs and $\rbv{\Omega}$} 
For any bounded Lipshitz domain $\Omega \subseteq \reals^d$, it is well-known that the Sobolev space \( H^s(\Omega) \) is (equivalent to) an RKHS if and only if \( s > d/2 \). For example, the Laplace and Matérn kernels are associated with Sobolev RKHSs \citep[see, e.g.,][Example~2.6]{kanagawa2018gaussianprocesseskernelmethods}. In contrast, \citet{zhou_embed} and \citet[cf.,][Corollary 4.36]{steinwart_svm} showed that the Gaussian RKHS \( \mathcal{H}^{\text{Gauss}}(\Omega) \) is contained in \( \mathcal{H}^{\text{Gauss}}(\Omega) \subset H^s(\Omega) \) for all \( s \geq 0 \). 
Recent work has further demonstrated that the RKHSs of typical neural tangent kernel (NTK) and neural network Gaussian process (NNGP) kernels for the ReLU activation function are equivalent to the Sobolev spaces \( H^{(d+1)/2}(\mathbb{S}^d) \) and \( H^{(d+3)/2}(\mathbb{S}^d) \), respectively~\citep{bietti2021deep,chen2021deep}. \cite{Steinwart2009OptimalRF} has shown that an optimal learning rates in Sobolev RKHSs can be achieved by cross-validating the regularization parameter. On another front, embedding properties relating Sobolev spaces and the second-order Radond-domain bounded variation space has been explored. For example, \cite{Ongie2020A} showed that \( W^{d+1}(L_1(\mathbb{R}^d)) \) embeds in \( \rbv{\mathbb{R}^d} \). More recently, \cite{mao2024approximationratesshallowreluk} established a sharp bound by proving that \( W^s(L_p(\Omega)) \) with \( s \geq 2 + (d+1)/2 \) for $p \ge 2$ embeds in \( \rbv{\Omega} \) for bounded domains \( \Omega \subset \mathbb{R}^d \). 

\section{Problem Setup and Preliminaries}\label{sec: setup}


\subsection{Gaussian Reproducing Kernel Hilbert Space}
We begin by defining a reproducing kernel Hilbert space (RKHS) associated with a Gaussian kernel on an infinite domain. For a given positive definite Mahalanobis matrix $\mathbf{M} \in \text{Sym}_{+}(\mathbb{R}^{d \times d})$, we define the Gaussian kernel $k_{\mathbf{M}}: \mathbb{R}^d \times \mathbb{R}^d \to \mathbb{R}$ as
\begin{equation}
    k_{\mathbf{M}}(\bm{x},\bm{y}) = \exp\left(-\frac{\|\bm{x}-\bm{y}\|_\mathbf{M}^2}{2\sigma^2}\right),
\end{equation}
where $\sigma > 0$ is a fixed scale parameter and the Mahalanobis distance is defined as
\begin{equation}
    \|\bm{x} - \bm{y}\|_\mathbf{M}^2 = (\bm{x} - \bm{y})^{\mathsf{T}}\mathbf{M}(\bm{x} - \bm{y}).
\end{equation}
The corresponding RKHS $\mathcal{H}$ is defined as the closure\footnote{With respect to the norm topology on $\cH$.} of the linear span of kernel functions:
\begin{align}
    \cH := \sf{cl}\paren{\curlybracket{f: \cX \to \reals\,\bigg\lvert\, n \in \mathbb{N}, f(\cdot) = \sum_{i = 1}^n \alpha_i \cdot k_{\mathbf{M}}(\bm{x}_i,\cdot),\, \bm{x}_i \in \reals^d}}, \label{eq: infinitegauss}
\end{align}
where the (squared) RKHS norm $\|\cdot\|_{\cH}^2$ of a kernel machine $f \in \cH$ is defined as $\|f\|_{\cH}^2 = \sum_{i,j} \alpha_i \alpha_j k_{\bm{M}}(\bm{x}_i,\bm{x}_j)$. Alternately, we can write $\|f\|_{\cH}^2 = \alpha^{\mathsf{T}}\mathbf{K}\alpha$ where $\mathbf{K} = (k_{\mathbf{M}}(\bm{x}_i,\bm{x}_j))_{i,j}$ is an $n \times n$ matrix. 

\subsection{Separated Sets and Function Spaces}
For our analysis, we introduce two key definitions of separated sets that play a crucial role in our theoretical development.
\begin{definition}[$(\bm{\beta}, \delta)$-separated set]
For any given scalar $\delta > 0$ and a vector $\bm{\beta} \in \reals^d$, a $(\bm{\beta}, \delta)$-separated subset of size $n \in \mathbb{N}$ is defined as  
\begin{align}
  \cC_n(\bm{\beta}, \delta) := \curlybracket{\{\bm{x}_1,\ldots, \bm{x}_n\} \,\middle|\, \, \forall\, i,j,\, |\bm{\beta}^{\mathsf{T}} \bm{x}_i - \bm{\beta}^{\mathsf{T}} \bm{x}_j| \ge \delta }.  
\end{align}
\end{definition}
This could be further generalized to the notion
\begin{definition}[$(\bm{\beta}, \delta, \eta)$-separated set] For any given scalars $\delta, \eta > 0$ and a vector $\bm{\beta} \in \reals^d$ a $(\bm{\beta}, \delta, \eta)$-separated subset of size $n \in \mathbb{N}$ is defined as 
\begin{align}
    \cC_n(\bm{\beta}, \delta, \eta) := \curlybracket{\{\bm{x}_1,\ldots, \bm{x}_n\} \,\middle|\, \, \forall\, i,j, \bm{\beta'} \in \reals^d \text{ s.t. } \bm{\beta}^{\mathsf{T}}\bm{\beta}' \ge \eta \norm{\bm{\beta}}\norm{\bm{\beta}'}, |\bm{\beta}'^{\mathsf{T}} \bm{x}_i - \bm{\beta}'^{\mathsf{T}} \bm{x}_j| \ge \delta}.
\end{align}
\end{definition}

\begin{example}\label{exam: set}
Let $\bm{\beta} = (1,0,\dots,0)$. For all $\eta_0 \ge \eta$, pick $\bm{\beta}' 
  \;=\; 
  \bigl(\eta_0,\sqrt{1-\eta_0^2},0,\dots,0\bigr)$ 
  so that
  $\|\bm{\beta}'\|=1 
  \;\;\text{and}\;\; 
  \bm{\beta}^{\mathsf{T}} \bm{\beta}' \;=\; \eta_0 \ge \eta$. Now define 
  \begin{align}
  \bm{x}_i := (i-1)\,\delta\,\bm{\beta}', 
  \quad i=1,\dots,n.    
  \end{align}
For $i\neq j$, we have 
\begin{align}
  \bigl|\bm{\beta}'^{\mathsf{T}} \bm{x}_i 
        - \bm{\beta}'^{\mathsf{T}} \bm{x}_j\bigr|
  = |(i-j)|\,\delta\,\|\bm{\beta}'\|^2 
  = |i-j|\,\delta 
  \,\ge\, \delta.  
\end{align}
Hence, $\{\bm{x}_1,\dots,\bm{x}_n\}$ is in $(\bm{\beta},\delta,\eta)$-separated subset of size $n$.
\end{example}

\begin{definition}[Unbounded Combinations]
For a kernel machine $f \in \mathcal{H}$ with representation $f = \sum_{i=1}^\infty \alpha_i k(\bm{x}_i,\cdot)$, we say the coefficient vector $\alpha = (\alpha_i)_{i=1}^\infty$ is unbounded with respect to $f$ if $\|\alpha\|_{\ell_1}= \sum_{i=1}^\infty |\alpha_i| = \infty$.
\end{definition}

\begin{example}\label{exam: divergence}
Consider a kernel machine $f \in \cH$ corresponding to the combination $\alpha = (a_n)$ defined by $a_n = \frac{1}{n}$ for each $n \in \mathbb{N}$ and a sequence of centers $(\bm{x}_n) \subset \reals^d$ such that for all $i,j$, $\norm{\bm{x}_i - \bm{x}_j} \ge |i-j| \delta$ for some fixed scalar $\delta > 0$. For this construction, Gaussian RKHS norm $\alpha^{\mathsf{T}} \mathbf{K} \alpha = \sum_{i = 1}^\infty\sum_{j = 1}^\infty \alpha_i\alpha_j k(\bm{x}_i,\bm{x}_j) < \infty$, but $\|\alpha\|_{\ell_1}$ is unbounded. 
\end{example}
We provide the proof of the statement in the example above in \iftoggle{longversion}{\appref{app: diverging}}{the supplemental materials}.

\subsection{Probabilist's Hermite Polynomials}
Probabilist's Hermite polynomials~\citep{szegő1975orthogonal}, denoted by \( \text{He}_d(z) : \reals \to \reals \), are defined by the generating function
\begin{align}
\exp\Bigl(zt-\frac{t^2}{2}\Bigr)=\sum_{d=0}^\infty \text{He}_d(z)\frac{t^d}{d!},    
\end{align}
and they are orthogonal with respect to the standard normal density
\begin{align}
\int_{-\infty}^{\infty} \text{He}_d(z)\, \text{He}_{d'}(z)\, \frac{1}{\sqrt{2\pi}} \exp\!\Bigl(-\frac{z^2}{2}\Bigr)\,dz = d!\,\delta_{dd'},    
\end{align}
where \(\delta_{dd'}\) is the Kronecker delta. We use the notation $H_d$ to denote the polynomial unless stated otherwise.

\subsection{Radon Transform and the Second-Order Radon-Domain Bounded Variation Space}

\paragraph{Radon Transform} For a function \(f : \mathbb{R}^d \to \mathbb{R}\), its Radon transform 
\(\mathcal{R}\{f\}\) is defined by
\begin{align}
  \mathcal{R}\{f\}(\bm{\beta}, t)
  = \int_{\{ \bm{x} \in \mathbb{R}^d : \bm{\beta}^{\mathsf{T}} \bm{x} = t \}} 
    f(\bm{x})\,ds(\bm{x}),  
\end{align}
where \(\bm{u} \in \mathbb{S}^{d-1}\), \(t \in \mathbb{R}\), and \(ds(\bm{x})\) is the 
\((d-1)\)-dimensional Lebesgue measure on the hyperplane 

\paragraph{Radon Bounded Variation Space} We define the second-order Radon bounded variation space $\rbv{\reals^d}$ as:
\begin{equation}
    \rbv{\reals^d} = \left\{f : \reals^d \to \reals \text{ is measurable} : \begin{aligned} & \rtv{f} < \infty, \\ & \esssup_{\bm{x} \in \reals^d} |f(\bm{x})|(1 + \|\bm{x}\|)^{-1} < \infty \end{aligned}\right\},
\end{equation}
where the second-order Radon total variation norm $\rtv{f}$ is a seminorm defined by
\begin{equation}
    \rtv{f} = c_d \|\partial_t^2\Lambda^{d-1}\mathcal{R}f\|_{\mathcal{M}(\mathbb{S}^{d-1}\times\mathbb{R})}. \label{eq: rnorm}
\end{equation}
Here, $\Lambda^{d-1} = (-\partial_t^2)^{\frac{d-1}{2}}$, $c_d^{-1} = 2(2\pi)^{d-1}$ is a dimension-dependent constant, and $\|\cdot\|_{\mathcal{M}(\mathbb{S}^{d-1}\times\mathbb{R})}$ denotes the total variation norm in the sense of measures supported on $\mathbb{S}^{d-1}\times\mathbb{R}$. Note that all operators must be understood 
in the distributional sense (see \cite{Parhi2020BanachSR,parhi2024distributional} for more details). The seminorm in \eqnref{eq: rnorm} exactly coincides with the representational cost of a function realized as a single-hidden-layer bounded-norm infinite-width network and coincides with the $\cR$-norm 
introduced by \citet{Ongie2020A}.
\section{$\mathcal{R}\mathrm{TV}^2$ of a Kernel Machine}\label{sec: rnorm}
In this section, we study the $\mathcal{R}\mathrm{TV}^2$ of kernel machines in the RKHS $\cH$. We show that one can write an explicit computable form for the case when the input dimension $d$ is odd. 
Consider the underlying matrix $\bM \succ 0$ for the Gaussian kernel $k_{\bM}$ has the following Cholesky decomposition 
\begin{equation}
\bM = \bL\bL^{\mathsf{T}}.
\end{equation}
Since $\bM$ is full rank and is in $\symmp$ this decomposition is unique. With this we state the following result on $\rtv{f}$ of a kernel machine $f \in \cH(\reals^d)$ with the proof in \iftoggle{longversion}{\appref{app: explicit}}{the supplemental materials}.


\begin{theorem}\label{thm: rtv}
    Assume that the input dimension $d$ is odd.
    For a kernel machine $f \in \cH(\reals^d)$ of the form
    \begin{align}
        f(\cdot) = \sum_{i =1}^k \alpha_i k_{\mathbf{M}}(\bm{x}_i,\cdot),
    \end{align}
    the $\mathcal{R}\mathrm{TV}^2$ of $f$ is given by
    \begin{align}
    \mathcal{R}\mathrm{TV}^2(f) = \frac{1}{\abs{\det \mathbf{L}}} \frac{1}{\sqrt{2\pi}} \int_{\mathbb{S}^{d-1}} \frac{1}{{\norm{\mathbf{L}^{-\mathsf{T}}\bm{\beta}}}} 
    \int_\mathbb{R}
    \abs{\sum_{i=1}^k \alpha_i \paren{\frac{\partial^{d+1}}{\partial t^{d+1}} \exp\paren{-\frac{(t-\bm{x}_i^\mathsf{T}\bm{\beta})^2}{2\norm{\mathbf{L}^{-\mathsf{T}}\bm{\beta}}^2}}}} \,\mathrm{d}t\,\mathrm{d}\bm{\beta},        
    \end{align}
where we have used the decomposition $\mathbf{M} = \mathbf{L}^\mathsf{T}\mathbf{L}$.
Furthermore, this can be extended to the case when $f$ has a representation with infinite kernel functions by taking limits.
\end{theorem}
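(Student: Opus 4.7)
The plan is to reduce to a direct computation using linearity. For odd $d$, the operator $\Lambda^{d-1}=(-\partial_t^2)^{(d-1)/2}$ is purely differential, so $\partial_t^2\Lambda^{d-1}=(-1)^{(d-1)/2}\partial_t^{d+1}$ acts classically on smooth functions. Linearity of $\mathcal{R}$ and of this operator then reduce everything to computing $\partial_t^{d+1}\mathcal{R}k_{\mathbf{M}}(\bm{x}_i,\cdot)(\bm{\beta},t)$ for a single center $\bm{x}_i$ and combining inside an $L^1$-norm over $\mathbb{S}^{d-1}\times\mathbb{R}$; this $L^1$-norm equals the $\mathcal{M}$-norm of the distribution in \eqnref{eq: rnorm} because the integrand will be a finite sum of Schwartz functions in $t$.

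The main step is computing the Radon transform of a single Gaussian. I would use the Fourier slice theorem: the $d$-dimensional Fourier transform of $\bm{x}\mapsto e^{-(\bm{x}-\bm{x}_i)^\mathsf{T}\mathbf{M}(\bm{x}-\bm{x}_i)/2}$ is, up to standard Gaussian constants, proportional to $(\det\mathbf{M})^{-1/2}\exp\!\paren{-\bm{\omega}^\mathsf{T}\mathbf{M}^{-1}\bm{\omega}/2}\,e^{-i\bm{\omega}^\mathsf{T}\bm{x}_i}$. Evaluating at $\bm{\omega}=\tau\bm{\beta}$ and inverting the one-dimensional Fourier transform in $\tau$ yields
\[
\mathcal{R}k_{\mathbf{M}}(\bm{x}_i,\cdot)(\bm{\beta},t) \;=\; \frac{(2\pi)^{(d-1)/2}}{\abs{\det\mathbf{L}}\,\norm{\mathbf{L}^{-\mathsf{T}}\bm{\beta}}}\,\exp\!\parenb{-\frac{(t-\bm{\beta}^\mathsf{T}\bm{x}_i)^2}{2\norm{\mathbf{L}^{-\mathsf{T}}\bm{\beta}}^2}},
\]
using the identities $\bm{\beta}^\mathsf{T}\mathbf{M}^{-1}\bm{\beta}=\norm{\mathbf{L}^{-\mathsf{T}}\bm{\beta}}^2$ and $\sqrt{\det\mathbf{M}}=\abs{\det\mathbf{L}}$. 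A Fourier-free alternative is the change of variables $\bm{y}=\mathbf{L}^\mathsf{T}(\bm{x}-\bm{x}_i)$, which rewrites the hyperplane integral in terms of the Radon transform of the isotropic Gaussian along the direction $\mathbf{L}^{-\mathsf{T}}\bm{\beta}/\norm{\mathbf{L}^{-\mathsf{T}}\bm{\beta}}$, producing the same expression.

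It remains to differentiate $d+1$ times in $t$, sum over $i$, insert into the total variation seminorm, and collect the prefactor $c_d=1/(2(2\pi)^{d-1})$ with the $(2\pi)^{(d-1)/2}$ factor from the Radon transform to match the stated constant; the derivative $\partial_t^{d+1}$ yields a probabilist's Hermite polynomial times the Gaussian, explaining the Hermite-based simplification advertised in the paper. For the extension to $f=\sum_{i=1}^\infty \alpha_i k_{\mathbf{M}}(\bm{x}_i,\cdot)$, I would take finite truncations $f_n$, apply the identity just proved, and pass to the limit; convergence of $f_n\to f$ in $\cH$ together with Fatou's lemma on the iterated integral over $\mathbb{S}^{d-1}\times\mathbb{R}$ yields the limiting formula. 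I expect the main obstacle to be careful constant bookkeeping through the Fourier inversion (and reconciling the implicit $\sigma$-scaling of the kernel); a secondary technical point is verifying that $\partial_t^2\Lambda^{d-1}\mathcal{R}f_n$ is a genuine $L^1$-function, so that its $\mathcal{M}$-norm coincides with its $L^1$-norm, which follows from the rapid decay of Hermite--Gauss functions.
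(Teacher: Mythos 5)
Your proposal follows essentially the same route as the paper's proof: factor $\mathbf{M}=\mathbf{L}^\mathsf{T}\mathbf{L}$, compute the Fourier transform of a single-center Gaussian via change of variables, apply the Fourier slice theorem and invert in one dimension to obtain $\mathcal{R}\{k_{\mathbf{M}}(\bm{x}_i,\cdot)\}(\bm{\beta},t)$, use that for odd $d$ the seminorm reduces to the $L^1$-norm of the $(d+1)$th $t$-derivative, and extend by linearity to $k$ centers and by limits to infinitely many. Your added remarks (the Fourier-free hyperplane change of variables, checking that the distributional $\mathcal{M}$-norm coincides with the $L^1$-norm, and Fatou for the infinite-center limit) are sensible refinements of steps the paper treats more briefly, but do not constitute a different argument.
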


\begin{proof}{\textbf{Outline}}
The proof proceeds in three main steps:
First, we leverage the factorization $\mathbf{M}=\mathbf{L}^\mathsf{T}\mathbf{L}$ to express the Gaussian kernel for a single center $\bm{x}_0$ as
\begin{align}
g(\bm{x})=\frac{1}{(2\pi)^{d/2}}\exp\Bigl(-\frac{|\mathbf{L}(\bm{x}-\bm{x}_0)|^2}{2}\Bigr).    
\end{align}
Next, we compute its Fourier transform using the change-of-variables formula to obtain
\begin{align}
\hat{g}(\bm{\omega})=\exp\Bigl(-\mathrm{i}\bm{x}_0^\mathsf{T}\bm{\omega}\Bigr)\frac{1}{|\det \mathbf{L}|}\exp\Bigl(-\frac{|\mathbf{L}^{-\mathsf{T}}\bm{\omega}|^2}{2}\Bigr).    
\end{align}
Finally, we apply the Fourier slice theorem~\citep{RammRadonBook} 
to connect the one-dimensional Fourier transform of $\mathcal{R}\{g\}(\bm{\beta},t)$ (with respect to $t$) with $d$-variate Fourier transform evaluated on one slice: $\hat{g}(\omega\bm{\beta})$. By inverting this transform, we derive the explicit expression for $\mathcal{R}\{g\}(\bm{\beta},t)$.
For odd dimensions $d$, the second-order Radon total variation of smooth functions is characterized by the $L_1$-norm of the $(d+1)$th $t$-derivative of $\mathcal{R}\{g\}(\bm{\beta},t)$ \citep[cf.,][]{Ongie2020A,Parhi2020BanachSR}. The result then readily extends to any finite kernel machine
\begin{align}
f(\cdot)=\sum_{i=1}^k \alpha_i k_\mathbf{M}(\bm{x}_i,\cdot)    
\end{align}
through the linearity of both the Fourier and Radon transforms.
\end{proof}

\subsection{$\mathcal{R}\mathrm{TV}^2$ as an Expression of Hermite Polynomials}
In \secref{sec: setup}, we discussed Hermite polynomials (probabilist's). In the following, we show how \thmref{thm: rtv} can be rewritten in terms of Hermite polynomials. In the next section, we study certain useful property of this expression to show the construction of a diverging $\mathcal{R}\mathrm{TV}^2$ sequence of kernel machines.



First, consider the the case of a $g \in \cH$ defined on one center $\bm{x}_0 \in \reals^d$.
Using \thmref{thm: rtv} we can write the $\mathcal{R}\mathrm{TV}^2$-norm of the kernel machine $g$ for one center $\bm{x}_0$ as
\begin{equation}
\mathcal{R}\mathrm{TV}^2(g) = \frac{1}{\abs{\det \mathbf{L}}} \frac{1}{\sqrt{2\pi}} \int_{\mathbb{S}^{d-1}} \frac{1}{{\norm{\mathbf{L}^{-\mathsf{T}}\bm{\beta}}}} 
    \int_\mathbb{R}
    \abs{ \paren{\frac{\partial^{d+1}}{\partial t^{d+1}} \exp\paren{-\frac{(t-\bm{x}_0^\mathsf{T}\bm{\beta})^2}{2\norm{\mathbf{L}^{-\mathsf{T}}\bm{\beta}}^2}}}} \,\mathrm{d}t\,\mathrm{d}\bm{\beta}. \label{eq: tv}
\end{equation}


First, consider the inner integral in $\mathcal{R}\mathrm{TV}^2(g)$ and denote it as
\begin{align}
I(\bm{\beta}) := \int_\mathbb{R}
    \abs{ \paren{\frac{\partial^{d+1}}{\partial t^{d+1}} \exp\paren{-\frac{(t-\bm{x}_0^\mathsf{T}\bm{\beta})^2}{2\sigma^2}}}} \mathrm{d}t,    
\end{align}
where we use $\sigma = \norm{\mathbf{L}^{-\mathsf{T}}\bm{\beta}}$.

Now, denote $\mu := \bm{x}_0\bm{\beta}$. Then, we note that the $(d+1)$-th derivative of $\exp\paren{-\frac{(t-\mu)^2}{2\sigma^2}}$ is related to the $(d+1)$-th Hermite polynomial $H_{d+1}$ as follows: 
\begin{align}
\frac{\partial^{d+1}}{\partial t^{d+1}} \exp\paren{-\frac{(t-\mu)^2}{2\sigma^2}} = (-1)^{d+1} \sigma^{-(d+1)} H_{d+1}\paren{\frac{t-\mu}{\sigma}} \exp\paren{-\frac{(t-\mu)^2}{2\sigma^2}}.   
\end{align}
\newline
Now, let $u = \frac{t - \mu}{\sigma}$. Thus, $\mathrm{d}u = \frac{1}{\sigma} \mathrm{d}t$. Substituting this transformation to $I(\bm{\beta})$ gives
\begin{align}
I(\bm{\beta}) = \int_\mathbb{R}
    \abs{ \paren{\frac{\partial^{d+1}}{\partial t^{d+1}} \exp\paren{-\frac{(t-\mu)^2}{2\sigma^2}}}} \mathrm{d}t &= \int_\mathbb{R} \abs{ (-1)^{d+1} \sigma^{-(d+1)} H_{d+1}(u) e^{-\frac{u^2}{2}}} \sigma \mathrm{d}u \label{eq: singleint1}\\
    &= \sigma^{-d} \int_\mathbb{R} \abs{H_{d+1}(u) e^{-\frac{u^2}{2}}} \mathrm{d}u. \label{eq: singleint2}
\end{align}
We can rewrite $I(\bm{\beta})$ as
\begin{align}
I(\bm{\beta}) = \sigma^{-d} \int_\mathbb{R} \abs{H_{d+1}(u) e^{-\frac{u^2}{2}}} \mathrm{d}u = \sigma^{-d} C_d,    
\end{align}
where $C_d :=  \int_\mathbb{R} \abs{H_{d+1}(u) e^{-\frac{u^2}{2}}} \mathrm{d}u$. In \secref{sec: diverge}, we bound this quantity to achieve certain decay of an infinite sum.

Replacing the computation in \eqnref{eq: singleint2} to \eqnref{eq: tv} gives
\begin{align}
    \mathcal{R}\mathrm{TV}^2(g) = \frac{C_d}{\abs{\det \mathbf{L}}} \frac{1}{\sqrt{2\pi}} \int_{\mathbb{S}^{d-1}} \frac{1}{{\norm{\mathbf{L}^{-\mathsf{T}}\bm{\beta}}^{d+1}}} \,\mathrm{d}\bm{\beta}. \label{eq: simprtv}
\end{align}
Thus, this shows that the expression of $\rtv{g}$ in \thmref{thm: rtv} can be simplified in terms of Hermite polynomials. 

In the following we extend this for $k > 1$ with the proof deferred to \iftoggle{longversion}{\appref{app: cov}}{the supplemental materials}.
\begin{lemma}\label{lem: cov}
 For a kernel machine $f \in \cH$ in the space of Gaussian RKHS. If $f$ has the following representation
    \begin{align}
        f(\cdot) = \sum_{i =1}^k \alpha_i k_{\mathbf{M}}(\bm{x}_i,\cdot)
    \end{align}
    for a center set $\curlybracket{\x_1, \x_2,\ldots, \x_k}$. Then, we can write
    \begin{gather}
\rtv{f} = \frac{1}{|\det \mathbf{L}| \sqrt{2\pi}} \int_{\mathbb{S}^{d-1}} \frac{I_{\sf{inner}}(\bm{\beta})}{\sigma^{d+1}} \, d\bm{\beta}, \label{eq:transformed_I}\\
I_{\sf{inner}}(\bm{\beta}) := \int_{\mathbb{R}} \left| \sum_{i=1}^k \alpha_i H_{d+1}\left( y + \Delta_i \right) e^{ -\frac{(y + \Delta_i)^2}{2} } \right| dy, \label{eq:transformedinner}
\end{gather}
where $\sigma = \|\mathbf{L}^{-\mathsf{T}}\bm{\beta}\|$ and 
\begin{align}
\Delta_i = \frac{\bm{x}_1^\mathsf{T}\bm{\beta} - \bm{x}_i^\mathsf{T}\bm{\beta}}{\|\mathbf{L}^{-\mathsf{T}}\bm{\beta}\|}
\quad \text{for } i = 2, 3, \ldots, k,    
\end{align}
and \( \Delta_1 = 0 \).
\end{lemma}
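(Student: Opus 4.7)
The plan is to start from the explicit expression for $\rtv{f}$ given in \thmref{thm: rtv}, apply the Hermite-polynomial representation of the $(d+1)$-th derivative of a univariate Gaussian term-by-term to the sum under the absolute value, and then perform a single change of variables in $t$ that recenters all terms around a common pivot (chosen to be $\bm{x}_1$). The key subtlety, as opposed to the single-center case worked out in \eqnref{eq: singleint1}--\eqnref{eq: singleint2}, is that each summand has its own Gaussian mean $\mu_i = \bm{x}_i^{\mathsf{T}} \bm{\beta}$, so we cannot absorb all the means simultaneously via independent substitutions. Instead, we perform one global substitution and let the offsets between means appear as the shifts $\Delta_i$.

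Concretely, for a fixed $\bm{\beta} \in \mathbb{S}^{d-1}$, set $\sigma := \|\mathbf{L}^{-\mathsf{T}}\bm{\beta}\|$ and $\mu_i := \bm{x}_i^{\mathsf{T}}\bm{\beta}$. By the standard Hermite identity used in \secref{sec: rnorm},
\begin{align}
\frac{\partial^{d+1}}{\partial t^{d+1}} \exp\!\paren{-\tfrac{(t-\mu_i)^2}{2\sigma^2}} = (-1)^{d+1}\,\sigma^{-(d+1)}\,H_{d+1}\!\paren{\tfrac{t-\mu_i}{\sigma}} \exp\!\paren{-\tfrac{(t-\mu_i)^2}{2\sigma^2}}.
\end{align}
Substituting into the inner integrand of \thmref{thm: rtv} and pulling the common factor $(-1)^{d+1}\sigma^{-(d+1)}$ out of the absolute value leaves an expression of the form $\sigma^{-(d+1)} \int_{\mathbb{R}} \abs{ \sum_i \alpha_i H_{d+1}\!\paren{\tfrac{t-\mu_i}{\sigma}} e^{-(t-\mu_i)^2/(2\sigma^2)} } \mathrm{d}t$.

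Next I would apply the single substitution $y := (t-\mu_1)/\sigma$, so that $\mathrm{d}t = \sigma\,\mathrm{d}y$ and
\begin{align}
\tfrac{t-\mu_i}{\sigma} = y + \tfrac{\mu_1-\mu_i}{\sigma} = y + \Delta_i, \qquad i = 1,\ldots,k,
\end{align}
with $\Delta_1 = 0$ by construction. This turns the inner integral into exactly $\sigma \cdot I_{\sf{inner}}(\bm{\beta})$ as defined in \eqnref{eq:transformedinner}, giving the inner piece a total prefactor of $\sigma^{-(d+1)}\cdot \sigma = \sigma^{-d}$. Multiplying by the outer $1/\sigma$ factor from \thmref{thm: rtv} yields the factor $\sigma^{-(d+1)}$ appearing in \eqnref{eq:transformed_I}, and the overall constant $1/(\abs{\det \mathbf{L}}\sqrt{2\pi})$ is preserved.

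The expected difficulty is essentially only bookkeeping: there is no analytic hurdle beyond ensuring the choice of pivot ($\bm{x}_1$ here) is made before taking the absolute value, so that a single substitution suffices and the $\sigma$-powers combine correctly. One should also verify measurability of the integrand jointly in $(\bm{\beta}, y)$ to justify the use of Fubini in the rewritten form, but this is immediate since each summand is smooth in $(\bm{\beta}, y)$ on $\mathbb{S}^{d-1}\setminus\{0\}\times \mathbb{R}$ and $\mathbf{L}^{-\mathsf{T}}\bm{\beta}\neq 0$ everywhere on $\mathbb{S}^{d-1}$ because $\mathbf{L}$ is invertible.
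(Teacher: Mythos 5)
Your proposal is correct and follows essentially the same route as the paper's own proof: apply the Hermite identity to each $(d+1)$-th derivative, pull the common factor $(-1)^{d+1}\sigma^{-(d+1)}$ out of the absolute value, and perform the single substitution $y=(t-\bm{x}_1^{\mathsf{T}}\bm{\beta})/\sigma$ so the offsets appear as the shifts $\Delta_i$. The power-of-$\sigma$ bookkeeping matches the paper's (which tracks $\sigma^{-(d+2)}$ before the substitution and $\sigma^{-(d+1)}$ after), so nothing is missing.
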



\section{A Sequence of Kernel Machines with Diverging $\mathcal{R}\mathrm{TV}^2$}\label{sec: diverge}
    
    
    
    
    
    
\begin{figure}
    \centering
    \begin{tikzpicture}[line cap=round,line join=round]

  \path[fill=green!20,draw=black,thick]
    (0,0)                
    -- (4,-1)            
    arc [ start angle=-90, end angle=90,
           x radius=0.4, y radius=1, 
           xshift=4cm, yshift=0cm ]
    -- cycle;            
  
  \draw[dashed] (4,0) ellipse [x radius=0.4, y radius=1];
  
  \node[left] at (0,0) {(0,0)};

    \node at (4.,-0.35) {$\eta$};
  \draw[->,blue,thick]
    (0,0)
    -- (5.5,0) 
    node[above,black] {$\bm{\beta}$};

    \draw[->,blue,dashed,thick]
    (0,0)
    -- (5,0.5) 
    node[above,black] {$\bm{\beta}'$};

    \fill[red,thick] (1.5,0.6) circle (1.6pt) node[left]{$\bm{x}_1$};
    \fill[red,thick] (2.5,1.0) circle (1.6pt) node[above]{$\bm{x}_2$};
    
    \fill[purple] (3.5,2.4) circle (1.6pt) node[right]{$\bm{x}_3$};
    \fill[purple] (5.5,2.1) circle (1.6pt) node[right]{$\bm{x}_4$};

    \draw[dashed,gray,thick] (0,0) -- (1.5,0.6) node[left]{};
     \draw[dashed,gray,thick] (0,0) -- (3.5,2.4) node[left]{};

    \node at (1.0,-0.43) {$\delta_1$};
    \node at (1.78,-0.658) {$\delta_{12}$};
    \node at (2.63,-0.808) {$\delta_{23}$};
    \draw[thick,black,thick] (1.5,0.6) -- (1.24,-0.33) node[left]{};
    \draw[thick,black,thick] (2.5,1.0) -- (2.15,-0.508) node[left]{};
    \draw[thick,black,thick] (3.5,2.4) -- (2.83,-0.708) node[left]{};

\end{tikzpicture}

    \caption{Illustration of an $(\bm{\beta}, \delta,\eta)$-separated set and a sequence $(\bm{x}_1,\bm{x}_2,\bm{x}_3,\bm{x}_4)$ that satisfy the requirements of the definition. The distances $\delta_1, \delta_{12}, \delta_{23}$ are at least $\delta$ apart.}
    \label{fig: separated}
\end{figure}
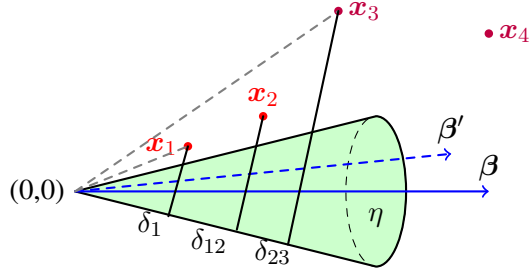

In this section, we construct a sequence of kernel machines $\curlybracket{f_n \in \cH(\reals^d)}$ such that their $\mathcal{R}\mathrm{TV}^2$ diverges. 
%
First, we state some useful assumptions on the probabilist's Hermite polynomial which are easy to verify to hold in general (but surely in odd dimension $d$).

\begin{assumption}[$\delta$-peak]\label{ass: peak}
Fix a dimension $d$. For a given Hermite polynomial $H_{d+1}$, we call an interval $[-\delta,\delta]$ a region of $\delta$-peak if:
\begin{enumerate}
    \item $\frac{\partial H_{d+1}(y)e^{-\frac{y^2}{2}}}{\partial y} < 0$ for all $y > \delta$
    \item $\frac{\partial H_{d+1}(y)e^{-\frac{y^2}{2}}}{\partial y} > 0$ for all $y < -\delta$
\end{enumerate}
\end{assumption}
Due to exponential decay of the product $H_{d+1}(y)e^{-\frac{y^2}{2}}$, for any odd dimension $d$ note that 
\begin{align}
  \frac{\partial H_{d+1}(y)e^{-\frac{y^2}{2}}}{\partial y} = (H'_{d+1}(y) - yH_{d+1}(y))e^{-y^2/2},  
\end{align}
where $-yH_{d+1}(y)$ is a polynomial with odd dimension with negative highest term and this implies there exists a $\delta$-peak.
Now, we state a trivial observation on the absolute integral of $H_{d+1}\left( y \right) \, e^{ -\frac{y^2}{2} }$. 
\begin{assumption}[$\epsilon$-safe]\label{ass: safe}
    We say a constant $\epsilon > 0$ is $\epsilon$-safe if 
    \begin{align}
      \int_{[-\epsilon,\epsilon]} \left| H_{d+1}\left( y \right) \, e^{ -\frac{y^2}{2} }\right| \, dy > 0.  
    \end{align}
\end{assumption}
Since $H_{d+1}$ is non-zero polynomial this holds trivially for any $\epsilon > 0$. Furthermore, the integral is increasing with the size of an $\epsilon$-interval. With this, we state a useful result on the convergence of a series of evaluations of $H_{d+1}\left( y \right) \, e^{ -\frac{y^2}{2}}$ on distinct points $y \in \reals$. The proof appears in \iftoggle{longversion}{\appref{app: useful}}{the supplemental materials}.



\begin{lemma}\label{lem: inter}
Let \( d\ge 0 \) be fixed and let \( H_{d+1}(y) \) denote the Hermite polynomial of degree \( d+1 \). Then for any constant $\rho > 0$, there exists a constant \( \delta_0 > 0 \) (depending only on \( d \)) such that for every \(\delta \ge \delta_0\) we have
\begin{align}
\sum_{j=2}^\infty \Bigl| H_{d+1}\bigl(j\delta\bigr) \Bigr|\, e^{-\frac{(j\delta)^2}{2}} < \frac{\rho}{4}.    
\end{align}
\end{lemma}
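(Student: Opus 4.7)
The plan is to exploit the elementary tension between the polynomial growth of $H_{d+1}$, at rate at most $|y|^{d+1}$, and the super-exponential decay of $e^{-y^2/2}$. Once $\delta$ is moderately large, each term in the sum decays super-exponentially in $j$, so the whole series will be controlled by its leading term ($j=2$), which itself can be made arbitrarily small by enlarging $\delta_0$ (which I read as being allowed to depend on $\rho$ in addition to $d$).

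Concretely, the first step is to invoke a standard polynomial bound $|H_{d+1}(y)| \le A_d(1+|y|)^{d+1}$ with $A_d$ a constant depending only on $d$. For $\delta \ge 1$ and $j \ge 2$, this yields
\begin{equation*}
|H_{d+1}(j\delta)|\,e^{-(j\delta)^2/2} \;\le\; 2^{d+1}A_d\,(j\delta)^{d+1}\,e^{-(j\delta)^2/2}.
\end{equation*}
Next, I would split $e^{-(j\delta)^2/2}=e^{-(j\delta)^2/4}\cdot e^{-(j\delta)^2/4}$ and use the elementary fact that $M_d:=\sup_{y\ge 0}y^{d+1}e^{-y^2/4}$ is finite (attained at $y=\sqrt{2(d+1)}$), which absorbs the polynomial factor and leaves the clean estimate
\begin{equation*}
|H_{d+1}(j\delta)|\,e^{-(j\delta)^2/2} \;\le\; 2^{d+1}A_dM_d\,e^{-(j\delta)^2/4}.
\end{equation*}
Since $j^2 \ge j$ for $j \ge 1$, summing over $j \ge 2$ is dominated by a genuine geometric series:
\begin{equation*}
\sum_{j=2}^{\infty} e^{-(j\delta)^2/4} \;\le\; \sum_{j=2}^{\infty} e^{-j\delta^2/4} \;=\; \frac{e^{-\delta^2/2}}{1-e^{-\delta^2/4}},
\end{equation*}
which vanishes as $\delta\to\infty$.

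To finish, given $\rho>0$ I would select $\delta_0=\delta_0(d,\rho)\ge 1$ large enough that $2^{d+1}A_dM_d\cdot e^{-\delta_0^2/2}/(1-e^{-\delta_0^2/4})<\rho/4$, which is always possible since the right-hand side decays like $e^{-\delta_0^2/2}$. For every $\delta\ge\delta_0$ the desired bound then follows by monotonicity. There is no serious obstacle here; the argument is essentially careful bookkeeping. The only mildly delicate point is choosing a splitting of the Gaussian factor that simultaneously absorbs the polynomial $(j\delta)^{d+1}$ \emph{and} leaves a residue summable in $j$ uniformly, and the half-half split combined with the crude inequality $j^2\ge j$ accomplishes both at once.
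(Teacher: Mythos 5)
Your proof is correct and follows essentially the same strategy as the paper's: bound $|H_{d+1}(y)|$ by a constant times $(1+|y|)^{d+1}$ and let the Gaussian factor kill everything for large $\delta$. The one substantive difference is that the paper simply asserts $\lim_{\delta\to\infty}\sum_{j\ge 2}(1+j\delta)^{d+1}e^{-(j\delta)^2/2}=0$ from the termwise limits, which strictly speaking requires a uniform domination argument to interchange the limit with the infinite sum; your half-half split of the Gaussian, absorbing $(j\delta)^{d+1}$ into $M_d=\sup_{y\ge0}y^{d+1}e^{-y^2/4}$ and reducing the tail to the explicit geometric bound $e^{-\delta^2/2}/(1-e^{-\delta^2/4})$, supplies exactly that justification and gives a quantitative choice of $\delta_0$. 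You are also right that $\delta_0$ must depend on $\rho$ as well as $d$, despite the lemma's phrasing; the paper's own proof implicitly does the same.
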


\subsection{Construction of a Diverging Sequence}

In \secref{sec: setup}, we defined the notions of $(\bm{\beta}, \delta, \eta)$-separated sets of size $n \in \mathbb{N}$. Let $(\bm{x}_1,\bm{x}_2,\ldots,\bm{x}_n)$ be a sequence in this set. Intuitively, any two centers in the sequence are at least $\delta$ apart when projected onto any direction $\bm{\beta}'$ such that $\bm{\beta}^{\mathsf{T}}\bm{\beta}' \ge \norm{\bm{\beta}}\norm{\bm{\beta}'}\eta$ (see \figref{fig: separated} for an illustration). Now, note that in \lemref{lem: cov}, we provided an alternate representation of the $\rtv{f}$ of a function as shown in \thmref{thm: rtv}, specifically the inner integral for each $\bm{\beta} \in \mathbb{S}^{d-1}$ has the form:
\begin{align}
  I_{\sf{inner}}(\bm{\beta}) := \int_{\mathbb{R}} \left| \sum_{i=1}^k \alpha_i H_{d+1}\left( y + \Delta_i \right) e^{ -\frac{(y + \Delta_i)^2}{2} } \right| dy,  
\end{align}
where each $\Delta_i = \bm{x}_1^\mathsf{T}\bm{\beta} - \bm{x}_i^\mathsf{T}\bm{\beta}$ (ignoring the normalization). If the projections $\bm{x}_i^\mathsf{T}\bm{\beta}$ are far apart on the real line $\reals$, noting the absolute decay in the values of $H_{d+1}\left( y \right) \, e^{ -\frac{y^2}{2} }$ outside the region of $\delta$-peak as asserted by \assref{ass: peak}, we can quantify and control contributions of terms corresponding to $j \neq i$ in the inner integral. 

Now, the property holds over a non-trivial cone $\cK(\bm{\beta}) := \curlybracket{\bm{\beta}' \in \mathbb{S}^{d-1}\,|\, \bm{\beta}^{\mathsf{T}}\bm{\beta}' \ge \eta }$ with non-zero volume. Now, note in \eqnref{eq:transformed_I}, which involves the following integral
\begin{align}
  \int_{\mathbb{S}^{d-1}} \frac{I_{\sf{inner}}(\bm{\beta})}{\sigma^{d+1}} \, d\bm{\beta}  
\end{align}
is non-trivially positive. Thus, we show along any direction $\bm{\beta}$ in the cone, $I_{\sf{inner}}$ diverges as $k$ grows if the kernel machine $f$ is defined for a sequence of centers from the $(\bm{\beta}, \delta, \eta)$-separated set.

 Now, we state the main theorem of the work. For ease of analysis we assume that the largest eigenvalue of $\mathbf{L}^{-1}$ is upper bounded by 1 which can be easily replaced with appropriate rescaling and choice of the parameters in the statement. 
 
\begin{theorem}[Diverging $\mathcal{R}\mathrm{TV}^2$]\label{thm: unbounded}
    Consider the Gaussian RKHS $\cH(\reals^d)$ as defined in \eqnref{eq: infinitegauss}. Assume $\epsilon \in (0,1/2]$ be a safe constant (see \assref{ass: safe}). Define 
    \begin{align}
        \rho :=  \int_{[-\epsilon,\epsilon]} \left| H_{d+1}\left( y \right) \, e^{ -\frac{y^2}{2} }\right| \, dy.
    \end{align}
    Fix a unit vector $\bm{\beta} \in \reals^d$, scalars $\eta \ge \frac{\sqrt{3}}{2}$, and $\delta = 3\max\{\epsilon, \delta_0(\rho), \delta'\}$ where $\delta_0(\rho)$ is chosen as per \lemref{lem: inter}, and $\delta'(d)$ as per \assref{ass: peak}. Let $\cX_\infty = \curlybracket{\bm{x}_1,\bm{x}_2,\ldots} \subset \reals^d$ be an infinite sequence such that any subsequence $\Gamma_n = \curlybracket{\bm{x}_1,\bm{x}_2,\ldots, \bm{x}_n}$ is in the $(\bm{\beta}, \delta, \eta)$-separated set of size $n$. Define a function $f \in \cH$ on $\cX_\infty$ that has a representation with an $f$-unbounded combination $\alpha_f$. Then,
    \begin{align}
        \rtv{\{f_n\}} \to \infty,
    \end{align}
    as $n \to \infty$.
\end{theorem}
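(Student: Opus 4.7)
The plan is to use \lemref{lem: cov} to reduce the diverging-norm claim to a pointwise lower bound on the inner Hermite integral $I_{\sf{inner}}(\bm{\beta}')$ (associated to $f_n = \sum_{i=1}^n \alpha_i k_{\mathbf{M}}(\bm{x}_i, \cdot)$) over a positive-measure cone of directions, and then to argue that this pointwise bound grows linearly in $\sum_{i=1}^n |\alpha_i|$, which diverges by hypothesis. First I would fix $\bm{\beta}' \in \mathcal{K}(\bm{\beta}) := \{\bm{\beta}' \in \mathbb{S}^{d-1} : \bm{\beta}^\mathsf{T}\bm{\beta}' \ge \eta\}$ and extract the geometric picture: by the $(\bm{\beta}, \delta, \eta)$-separation hypothesis the projections $\bm{x}_i^\mathsf{T}\bm{\beta}'$ are pairwise at least $\delta$ apart, so after reindexing so that they are increasing, consecutive projections differ by at least $\delta$. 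Since $\sigma := \|\mathbf{L}^{-\mathsf{T}}\bm{\beta}'\| \le 1$ by the eigenvalue hypothesis, the normalized centers $\xi_i := (\bm{x}_i^\mathsf{T}\bm{\beta}' - \bm{x}_1^\mathsf{T}\bm{\beta}')/\sigma$ inherit the spacing $|\xi_i - \xi_j| \ge |i-j|\delta$ along this direction.

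The heart of the proof is a windowwise lower bound. I would introduce the disjoint windows $W_i := [\xi_i - \epsilon, \xi_i + \epsilon]$ (disjoint since $\delta \ge 3\epsilon$) and the centered bumps $b_i(y) := H_{d+1}(y - \xi_i)\, e^{-(y-\xi_i)^2/2}$, so that $I_{\sf{inner}}(\bm{\beta}') = \int_\reals |\sum_i \alpha_i b_i(y)|\,dy$. Restricting the integral to $\bigcup_i W_i$ and applying the reverse triangle inequality inside each window gives
\begin{align*}
I_{\sf{inner}}(\bm{\beta}') \;\ge\; \sum_{i=1}^n \Bigl( |\alpha_i|\rho - \sum_{j \ne i} |\alpha_j|\, \tau_{ij} \Bigr) \;=\; \rho \sum_{i=1}^n |\alpha_i| \;-\; \sum_{j=1}^n |\alpha_j| \sum_{i \ne j} \tau_{ij},
\end{align*}
where $\tau_{ij} := \int_{W_i} |b_j(y)|\,dy$ and the principal contribution $|\alpha_i|\rho$ comes directly from \assref{ass: safe}. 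Consequently, it suffices to show a uniform cross-tail estimate $\sup_j \sum_{i \ne j} \tau_{ij} \le (1-c)\rho$ for some absolute constant $c > 0$, which then yields the pointwise bound $I_{\sf{inner}}(\bm{\beta}') \ge c\rho \sum_{i=1}^n |\alpha_i|$ on the cone.

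The principal obstacle is exactly this uniform cross-term bound. Substituting $v = y - \xi_j$ recasts $\tau_{ij}$ as the integral of $|H_{d+1}(v)| e^{-v^2/2}$ over a length-$2\epsilon$ interval centered at $\xi_i - \xi_j$, whose distance from the origin is at least $|i - j|\delta$. The $\delta$-peak property (\assref{ass: peak}) combined with $\delta \ge 3\delta'$ guarantees monotone decay of $|H_{d+1}(v)| e^{-v^2/2}$ for $|v| \ge 2\delta/3 \ge 2\delta'$, so $\tau_{ij} \le 2\epsilon\, |H_{d+1}(|i-j|\delta - \epsilon)|\, e^{-(|i-j|\delta - \epsilon)^2/2}$. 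Using $|i-j|\delta - \epsilon \ge (2/3)|i-j|\delta$ (valid since $\epsilon \le \delta/3$) and monotonicity once more, the problem reduces to controlling $\sum_{k \ge 1} |H_{d+1}(k\tilde\delta)|\, e^{-(k\tilde\delta)^2/2}$ with $\tilde\delta := 2\delta/3 \ge 2\delta_0(\rho)$: \lemref{lem: inter} directly bounds the $k \ge 2$ tail by $\rho/4$, while the $k = 1$ term is dominated by $|H_{d+1}(2\delta_0(\rho))|\, e^{-(2\delta_0(\rho))^2/2}$ (monotonicity), which is itself a single summand of the $\rho/4$-bounded tail in \lemref{lem: inter} evaluated at $\delta_0(\rho)$. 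Accounting for at most two indices at each signed distance gives $\sup_j \sum_{i \ne j} \tau_{ij} \le 4\epsilon \cdot (\rho/2) = 2\epsilon\rho$, and sharpening the $2\epsilon \sup$ overestimate (or working slightly inside the allowed range for $\epsilon$) produces the required $(1-c)\rho$ bound for an absolute $c > 0$.

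The conclusion is then immediate: substituting the pointwise bound into \lemref{lem: cov}, restricting the spherical integral to $\mathcal{K}(\bm{\beta})$, and using $\sigma \le 1$ gives
\begin{align*}
\rtv{f_n} \;\ge\; \frac{1}{|\det \mathbf{L}|\sqrt{2\pi}} \int_{\mathcal{K}(\bm{\beta})} \frac{c\rho \sum_{i=1}^n |\alpha_i|}{\sigma^{d+1}}\, d\bm{\beta}' \;\ge\; \frac{c\rho\,|\mathcal{K}(\bm{\beta})|}{|\det \mathbf{L}|\sqrt{2\pi}} \sum_{i=1}^n |\alpha_i|,
\end{align*}
where $|\mathcal{K}(\bm{\beta})|$ denotes the surface measure of the cone, which is strictly positive because $\eta < 1$. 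Since $\alpha_f$ is $f$-unbounded, $\sum_{i=1}^n |\alpha_i| \to \infty$, and therefore $\rtv{f_n} \to \infty$ as $n \to \infty$, completing the argument.
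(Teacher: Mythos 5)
Your proposal follows essentially the same route as the paper's proof: reduce to the Hermite-form inner integral via \lemref{lem: cov}, restrict to disjoint $\epsilon$-windows around the projected centers, extract the main term $\rho\,|\alpha_i|$ from each window via \assref{ass: safe}, control the cross-window contributions using the monotone decay from \assref{ass: peak} together with \lemref{lem: inter}, and integrate the resulting pointwise bound $I_{\sf{inner}} \gtrsim \rho \sum_i |\alpha_i|$ over the positive-measure cone $\cK(\bm{\beta})$. The only (minor) difference is that you explicitly account for the nearest-neighbor ($k=1$) cross term, which the paper's invocation of \lemref{lem: inter} (a bound on the $j\ge 2$ tail) glosses over; otherwise the decomposition, key lemmas, and conclusion are identical.
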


\begin{proof}
    First, we rewrite \eqnref{eq:transformed_I} for the $\mathcal{R}\mathrm{TV}^2$ of the function $f_k$ as follows
    \begin{align}
        \rtv{f_k} &= \frac{1}{|\det \mathbf{L}| \sqrt{2\pi}} \int_{\mathbb{S}^{d-1}} \frac{1}{\sigma^{d+1}} \int_{\mathbb{R}} \left| \sum_{i=1}^k \alpha_i H_{d+1}\left( y + \Delta_i \right) e^{ -\frac{(y + \Delta_i)^2}{2} } \right| dy \, d\bm{\beta},
    \end{align}
    with the inner integral
    \begin{align}
      I_{\sf{inner}}(\bm{\beta}) = \int_{\reals} \left| \sum_{i=1}^k \alpha_i \, H_{d+1}\left( y + \Delta_i \right) \, e^{ -\frac{(y + \Delta_i)^2}{2} } \right| \, dy,  
    \end{align}
where
\begin{align}
\Delta_1 = 0,\,\,\Delta_i = \frac{a_1 - a_i}{\sigma} = \frac{\bm{x}_1^\mathsf{T}\bm{\beta} - \bm{x}_i^\mathsf{T}\bm{\beta}}{\|\mathbf{L}^{-\mathsf{T}}\bm{\beta}\|}
\quad \text{for } i = 2, 3, \ldots, k.    
\end{align}
First, define the cone $\cK$ wrt $\bm{\beta}$ and $\eta$ as stated in the theorem statement, i.e.,
\begin{align}
    \cK := \curlybracket{\bm{\beta}' \in \mathbb{S}^{d-1}\:\middle|\: \bm{\beta}'^{\mathsf{T}} \bm{\beta} \ge \eta}.
\end{align}
Note that the volume $\vol{\cK} > 0$ implying that 
\begin{align}
    \int_{\mathbb{S}^{d-1}} \frac{1}{\sigma^{d+1}} \, d\bm{\bm{\beta}} \ge  \int_{\cK} \frac{1}{\sigma^{d+1}} \, d\bm{\beta} = (\star). \label{eq: cone}
\end{align}
    Note that $\mathbf{M} = \mathbf{L}^\mathsf{T}\mathbf{L}$. We assume that the $\mathbf{M}$ is symmetric and PSD, implying that singular values of $\mathbf{L}$ are \tt{exactly} the square root of the eigenvalues of $\mathbf{M}$, i.e.,
    \begin{align}
       \sigma_i(\mathbf{L}) = \sqrt{|\lambda_i(\mathbf{M})|}. 
    \end{align}
    But, since $\mathbf{L}$ is invertible implying the singular values if $\mathbf{L}^{-1}$ are inverses to singular values of $\mathbf{L}$, i.e., $\sigma_i(\mathbf{L}^{-1}) = \frac{1}{\sigma_i(\mathbf{L})}$.
    Thus, we can rewrite \eqnref{eq: cone} as
    \begin{align}
     (\star) & =  \int_{\cK} \frac{1}{\sigma_{\max}(\mathbf{L}^{-1})^{d+1}} \, d\bm{\beta} \nonumber\\
     &\ge \int_{\cK} \sigma_{\min}(\mathbf{L})^{d+1} \, d\bm{\beta} \nonumber \\ 
     &= \int_{\cK} \lambda_{\min}(\mathbf{M})^{\frac{(d+1)}{2}} \, d\bm{\beta} =  \lambda_{\min}(\mathbf{M})^{\frac{(d+1)}{2}} \vol{\cK} > 0. \label{eq: vol}
    \end{align}
    Now, we will show for any $\bm{\beta}' \in \cK$, there is a non-trivial lower bound on $I_{\sf{inner}}(\bm{\beta}')$. Note that by definition, $\Gamma_n$ is in the $(\bm{\beta}',\delta)$-separated set. 

    Hence, for all $i,j = 2, 3, \ldots$
    \begin{align}
        |\Delta_i - \Delta_j| \ge \delta.
    \end{align}
    Define the neighborhoods $\curlybracket{N_i}$ for the safe constant $\epsilon$ as follows
    \begin{align}
      N_i := \bracket{-\Delta_i - \epsilon, -\Delta_i + \epsilon}.  
    \end{align}
    Now, consider the integral on the neighborhood $N_i$:
    \begin{align}
        &\int_{N_i} \left| \sum_{i=1}^k \alpha_i \, H_{d+1}\left( y + \Delta_i \right) \, e^{ -\frac{(y + \Delta_i)^2}{2} } \right| \, dy \nonumber \\
        &\ge \int_{N_i} \left|\alpha_i \, H_{d+1}\left( y + \Delta_i \right) \, e^{ -\frac{(y + \Delta_i)^2}{2} } \right| \, dy - \underbrace{\int_{N_i} \left| \sum_{j=1, j \neq i}^k \alpha_j \, H_{d+1}\left( y + \Delta_j \right) \, e^{ -\frac{(y + \Delta_j)^2}{2} } \right| \, dy}_{=: \large{\theta_i}}.
    \end{align}
    The second line follows from the triangle inequality. Now, change of variable simplifies the first equation as 
    \begin{align}
        \int_{N_i} \left|\alpha_i \, H_{d+1}\left( y + \Delta_i \right) \, e^{ -\frac{(y + \Delta_i)^2}{2} } \right| \, dy 
        \ge |\alpha_i| \int_{[-\epsilon,\epsilon]} \left|H_{d+1}\left( y \right) \, e^{ -\frac{y^2}{2} } \right| \, dy  
        \ge |\alpha_i| \rho. 
    \end{align}
    In the last equation, we used the definition of $\rho$.

    Now, summing over each $i = 1, 2, \ldots k$, we get
    \begin{align}
        I_{\sf{inner}} \ge \sum_{i = 1}^k \int_{N_i} \left| \sum_{i=1}^k \alpha_i \, H_{d+1}\left( y + \Delta_i \right) \, e^{ -\frac{(y + \Delta_i)^2}{2} } \right| \, dy 
         \ge \rho \sum_{i=1}^k |\alpha_i| - \sum_{i=1}^k \theta_i. \label{eq: diff}
    \end{align}
    Now, we will show how to bound the sum $\sum_{i=1}^k \theta_i$.
    \paragraph{Bounding $\theta_k$:} First note that, we can bound each $\theta_i$ as follows
    \begin{align}
        \theta_i &= \int_{N_i} \left| \sum_{j=1, j \neq i}^k \alpha_j \, H_{d+1}\left( y + \Delta_j \right) \, e^{ -\frac{(y + \Delta_j)^2}{2} } \right| \, dy\nonumber\\ &\le  \sum_{j=1, j \neq i}^k \int_{N_i} \left|  \alpha_j \, H_{d+1}\left( y + \Delta_j \right) \, e^{ -\frac{(y + \Delta_j)^2}{2} } \right| \, dy  \label{eq:theta1}\\ 
        &\le  \sum_{j=1, j \neq i}^k \int_{\bracket{-\epsilon,\epsilon}} \left|  \alpha_j \, H_{d+1}\left(-\Delta_i + y + \Delta_j \right) \, e^{ -\frac{(-\Delta_i + y + \Delta_j)^2}{2} } \right| \, dy \label{eq:theta2}\\ 
        & \le \sum_{j=1, j \neq i}^k |\alpha_j| \int_{\bracket{-\epsilon,\epsilon}}  \left| H_{d+1}\left( |i-j|\delta \right) \, e^{ -\frac{(|i-j| \delta)^2}{2} } \right| \, dy \label{eq:theta3}\\ 
        & \le 2 \epsilon \sum_{j=1, j \neq i}^k |\alpha_j| \left|H_{d+1}\left( |i-j|\delta \right)\right| \, e^{ -\frac{(|i-j| \delta)^2}{2} }\nonumber \\
        & \le \sum_{j=1, j \neq i}^k |\alpha_j| \left|H_{d+1}\left( |i-j|\delta \right)\right| \, e^{ -\frac{(|i-j| \delta)^2}{2} }. \label{eq:theta4} 
    \end{align}
    \eqnref{eq:theta1} is a straight-forward application of triangle inequality. In \eqnref{eq:theta2}, we simplify \eqnref{eq:theta1} via change of variable. In \eqnref{eq:theta3}, we use the assumption of $\delta$-peak. To simplify $-\Delta_i + y + \Delta_j$ for a choice of $y \in \bracket{-\epsilon,\epsilon}$, we assume that indices of the projections $\Delta_i$ for $i = 1,2, \ldots$ are arranged in ascending order in their values on the real line. Since each consecutive projections are at least $\delta$ apart, we can bound $|-\Delta_i + y + \Delta_j| > (|i-j| - 1/3)\delta$. Since Hermite polynomials in even dimension, i.e. $d+1$, are even 
    \begin{align}
      \left|H_{d+1}\left(-\Delta_i + y + \Delta_j \right) \, e^{ -\frac{(-\Delta_i + y + \Delta_j)^2}{2}}\right| \le \left| H_{d+1}\left((|i - j| - 2/3)\delta\right) e^{ -\frac{((|i - j| - 2/3)\delta)^2}{2}} \right|. 
    \end{align}
    For simplification, we have omitted the $-(2/3)\delta$ additive term in the equation above. Finally, \eqnref{eq:theta4} follows as $\epsilon \le 1/2$.

    Summing over each $i = 1,\ldots,k$
    \begin{align}
        \sum_{i=1}^k \theta_k &= \sum_{i=1}^k \int_{N_i} \left| \sum_{j=1, j \neq i}^k \alpha_j \, H_{d+1}\left( y + \Delta_j \right) \, e^{ -\frac{(y + \Delta_j)^2}{2} } \right| \, dy \nonumber \\
        &\le \sum_{i=1}^k \sum_{j=1, j \neq i}^k |\alpha_j| \left|H_{d+1}\left( |i-j|\delta \right)\right| \, e^{ -\frac{(|i-j| \delta)^2}{2} } \nonumber \\
        &\le 2\paren{\sum_{j=1}^{k}  \left|H_{d+1}\left( j\delta \right) \right|\, e^{ -\frac{(j \delta)^2}{2} }} \sum_{i=1}^k |\alpha_i|.
    \end{align}
    Using \lemref{lem: inter}, we can rewrite \eqnref{eq: diff} as
    \begin{align}
        I_{\sf{inner}} \ge \rho \sum_{i=1}^k |\alpha_i| - 2\paren{\sum_{j=2}^k  \left|H_{d+1}\left( j\delta \right)\right| \, e^{ -\frac{(j \delta)^2}{2} }} \sum_{i=1}^k |\alpha_i| \ge \frac{\rho}{2}\sum_{i=1}^k |\alpha_i|.
    \end{align}

    \noindent Now, note that using \eqnref{eq: cone} and \eqnref{eq: vol}
    \begin{align}
        \rtv{f_k} &\ge \frac{1}{|\det \mathbf{L}| \sqrt{2\pi}} \int_{\cK} \frac{1}{\sigma^{d+1}} \int_{\mathbb{R}} \left| \sum_{i=1}^k \alpha_i H_{d+1}\left( y + \Delta_i \right) e^{ -\frac{(y + \Delta_i)^2}{2} } \right| dy \, d\bm{\beta} \nonumber \\
        & \ge \frac{1}{|\det \mathbf{L}| \sqrt{2\pi}} \int_{\cK} \frac{1}{\sigma^{d+1}}  \paren{\frac{\rho}{2}\sum_{i=1}^k |\alpha_i| } \, d\bm{\beta} \nonumber \\
        & \ge \paren{\frac{1}{|\det \mathbf{L}| \sqrt{2\pi}} \lambda_{\min}(\mathbf{M})^{\frac{(d+1)} {2}} \vol{\cK}} \cdot\paren{\frac{\rho}{2}\sum_{i=1}^k |\alpha_i| }.
    \end{align}
    Now, in the limiting case
    \begin{align}
        \lim_{k \to \infty} \rtv{f_k} &\ge \paren{\frac{1}{|\det \mathbf{L}| \sqrt{2\pi}} \lambda_{\min}(\mathbf{M})^{\frac{(d+1)} {2}} \vol{\cK}} \lim_{k \to \infty} \paren{\frac{\rho}{2}\sum_{i=1}^k |\alpha_i| } \nonumber \\
        & =  \paren{\frac{1}{|\det \mathbf{L}| \sqrt{2\pi}} \lambda_{\min}(\mathbf{M})^{\frac{(d+1)} {2}} \vol{\cK}} \cdot \frac{\rho}{2} \norm{\alpha_f} \to \infty.
    \end{align}
    Hence the claim of the theorem has been proven.
\end{proof}


\section{Conclusion}
In this work, we showed that if we allow \tt{unbounded} domains, there exist functions that cannot be represented within $\rbv{\reals^d}$, but can be represented within the Gaussian RKHS. This analysis reveals a nontrivial gap between kernel methods and neural networks by exhibiting functions that kernel methods can represent, whereas neural networks cannot.
%
On that note, this observation motivates further investigation of what this gap entails in a learning setting. This observation also motivates investigating if a similar gap exists between kernel methods and \tt{deep} neural networks. We leave the details to future work.

\section*{Acknowledgement}
Authors thank anonymous reviewers for helpful feedback on the work. AK thanks the National Science Foundation for support under grant IIS-2211386 in the duration of this project. MB acknowledges support from the National Science Foundation (NSF) and the Simons Foundation for the Collaboration on the Theoretical Foundations of Deep Learning through awards DMS-2031883 and \#814639 as well as the  TILOS institute (NSF CCF-2112665) and the Office of Naval Research (ONR N000142412631).

\bibliography{ref}

\begin{thebibliography}{34}
\providecommand{\natexlab}[1]{#1}
\providecommand{\url}[1]{\texttt{#1}}
\expandafter\ifx\csname urlstyle\endcsname\relax
  \providecommand{\doi}[1]{doi: #1}\else
  \providecommand{\doi}{doi: \begingroup \urlstyle{rm}\Url}\fi

\bibitem[Allen-Zhu and Li(2019)]{allen2019can}
Zeyuan Allen-Zhu and Yuanzhi Li.
\newblock What can resnet learn efficiently, going beyond kernels?
\newblock \emph{Advances in Neural Information Processing Systems}, 32, 2019.

\bibitem[Arora et~al.(2019)Arora, Du, Li, Salakhutdinov, Wang, and Yu]{arora2019harnessing}
Sanjeev Arora, Simon~S Du, Zhiyuan Li, Ruslan Salakhutdinov, Ruosong Wang, and Dingli Yu.
\newblock Harnessing the power of infinitely wide deep nets on small-data tasks.
\newblock \emph{arXiv preprint arXiv:1910.01663}, 2019.

\bibitem[Bach(2017)]{bach_curse}
Francis Bach.
\newblock Breaking the curse of dimensionality with convex neural networks.
\newblock \emph{J. Mach. Learn. Res.}, 18\penalty0 (1):\penalty0 629–681, January 2017.
\newblock ISSN 1532-4435.

\bibitem[Barron(1993)]{barron_universal}
A.R. Barron.
\newblock Universal approximation bounds for superpositions of a sigmoidal function.
\newblock \emph{IEEE Transactions on Information Theory}, 39\penalty0 (3):\penalty0 930--945, 1993.
\newblock \doi{10.1109/18.256500}.

\bibitem[Bartolucci et~al.(2023)Bartolucci, {De Vito}, Rosasco, and Vigogna]{BARTOLUCCI2023194}
Francesca Bartolucci, Ernesto {De Vito}, Lorenzo Rosasco, and Stefano Vigogna.
\newblock Understanding neural networks with reproducing kernel banach spaces.
\newblock \emph{Applied and Computational Harmonic Analysis}, 62:\penalty0 194--236, 2023.
\newblock ISSN 1063-5203.
\newblock \doi{https://doi.org/10.1016/j.acha.2022.08.006}.
\newblock URL \url{https://www.sciencedirect.com/science/article/pii/S1063520322000768}.

\bibitem[Bietti and Bach(2021)]{bietti2021deep}
Alberto Bietti and Francis Bach.
\newblock Deep equals shallow for {ReLU} networks in kernel regimes.
\newblock In \emph{International Conference on Learning Representations}, 2021.
\newblock URL \url{https://openreview.net/forum?id=aDjoksTpXOP}.

\bibitem[Chen and Xu(2021)]{chen2021deep}
Lin Chen and Sheng Xu.
\newblock Deep neural tangent kernel and laplace kernel have the same {\{}rkhs{\}}.
\newblock In \emph{International Conference on Learning Representations}, 2021.
\newblock URL \url{https://openreview.net/forum?id=vK9WrZ0QYQ}.

\bibitem[Chizat and Bach(2018)]{chizat2018global}
Lenaic Chizat and Francis Bach.
\newblock On the global convergence of gradient descent for over-parameterized models using optimal transport.
\newblock \emph{Advances in neural information processing systems}, 31, 2018.

\bibitem[Ghorbani et~al.(2019)Ghorbani, Mei, Misiakiewicz, and Montanari]{ghorbani2019limitations}
Behrooz Ghorbani, Song Mei, Theodor Misiakiewicz, and Andrea Montanari.
\newblock Limitations of lazy training of two-layers neural network.
\newblock \emph{Advances in Neural Information Processing Systems}, 32, 2019.

\bibitem[Ghorbani et~al.(2021{\natexlab{a}})Ghorbani, Mei, Misiakiewicz, and Montanari]{Ghorbani_2021}
Behrooz Ghorbani, Song Mei, Theodor Misiakiewicz, and Andrea Montanari.
\newblock When do neural networks outperform kernel methods?*.
\newblock \emph{Journal of Statistical Mechanics: Theory and Experiment}, 2021\penalty0 (12):\penalty0 124009, dec 2021{\natexlab{a}}.
\newblock \doi{10.1088/1742-5468/ac3a81}.
\newblock URL \url{https://dx.doi.org/10.1088/1742-5468/ac3a81}.

\bibitem[Ghorbani et~al.(2021{\natexlab{b}})Ghorbani, Mei, Misiakiewicz, and Montanari]{ghorbani_curse}
Behrooz Ghorbani, Song Mei, Theodor Misiakiewicz, and Andrea Montanari.
\newblock {Linearized two-layers neural networks in high dimension}.
\newblock \emph{The Annals of Statistics}, 49\penalty0 (2):\penalty0 1029 -- 1054, 2021{\natexlab{b}}.
\newblock \doi{10.1214/20-AOS1990}.
\newblock URL \url{https://doi.org/10.1214/20-AOS1990}.

\bibitem[Kanagawa et~al.(2018)Kanagawa, Hennig, Sejdinovic, and Sriperumbudur]{kanagawa2018gaussianprocesseskernelmethods}
Motonobu Kanagawa, Philipp Hennig, Dino Sejdinovic, and Bharath~K Sriperumbudur.
\newblock Gaussian processes and kernel methods: A review on connections and equivalences, 2018.
\newblock URL \url{https://arxiv.org/abs/1807.02582}.

\bibitem[Mao et~al.(2024)Mao, Siegel, and Xu]{mao2024approximationratesshallowreluk}
Tong Mao, Jonathan~W. Siegel, and Jinchao Xu.
\newblock Approximation rates for shallow relu$^k$ neural networks on sobolev spaces via the radon transform, 2024.
\newblock URL \url{https://arxiv.org/abs/2408.10996}.

\bibitem[Mei et~al.(2016)Mei, Bai, and Montanari]{mei_rate}
Song Mei, Yu~Bai, and Andrea Montanari.
\newblock The landscape of empirical risk for non-convex losses.
\newblock \emph{Annals of Statistics}, 46, 07 2016.
\newblock \doi{10.1214/17-AOS1637}.

\bibitem[Mei et~al.(2018)Mei, Montanari, and Nguyen]{mean_field_songmei}
Song Mei, Andrea Montanari, and Phan-Minh Nguyen.
\newblock A mean field view of the landscape of two-layer neural networks.
\newblock \emph{Proceedings of the National Academy of Sciences}, 115\penalty0 (33):\penalty0 E7665--E7671, 2018.
\newblock \doi{10.1073/pnas.1806579115}.
\newblock URL \url{https://www.pnas.org/doi/abs/10.1073/pnas.1806579115}.

\bibitem[Novak et~al.(2018)Novak, Xiao, Lee, Bahri, Yang, Hron, Abolafia, Pennington, and Sohl-Dickstein]{novak2018bayesian}
Roman Novak, Lechao Xiao, Jaehoon Lee, Yasaman Bahri, Greg Yang, Jiri Hron, Daniel~A Abolafia, Jeffrey Pennington, and Jascha Sohl-Dickstein.
\newblock Bayesian deep convolutional networks with many channels are gaussian processes.
\newblock \emph{arXiv preprint arXiv:1810.05148}, 2018.

\bibitem[Ongie et~al.(2020)Ongie, Willett, Soudry, and Srebro]{Ongie2020A}
Greg Ongie, Rebecca Willett, Daniel Soudry, and Nathan Srebro.
\newblock A function space view of bounded norm infinite width relu nets: The multivariate case.
\newblock In \emph{International Conference on Learning Representations}, 2020.
\newblock URL \url{https://openreview.net/forum?id=H1lNPxHKDH}.

\bibitem[Parhi and Nowak(2021)]{Parhi2020BanachSR}
Rahul Parhi and Robert~D. Nowak.
\newblock Banach space representer theorems for neural networks and ridge splines.
\newblock \emph{Journal of Machine Learning Research}, 22\penalty0 (43):\penalty0 1--40, 2021.
\newblock URL \url{https://jmlr.org/papers/v22/20-583.html}.

\bibitem[Parhi and Nowak(2022)]{Parhi2021WhatKO}
Rahul Parhi and Robert~D. Nowak.
\newblock What kinds of functions do deep neural networks learn? insights from variational spline theory.
\newblock \emph{SIAM Journal on Mathematics of Data Science}, 4\penalty0 (2):\penalty0 464--489, 2022.
\newblock \doi{10.1137/21M1418642}.

\bibitem[Parhi and Nowak(2023{\natexlab{a}})]{near_mini}
Rahul Parhi and Robert~D. Nowak.
\newblock Near-minimax optimal estimation with shallow {ReLU} neural networks.
\newblock \emph{IEEE Transactions on Information Theory}, 69\penalty0 (2):\penalty0 1125--1140, 2023{\natexlab{a}}.
\newblock \doi{10.1109/TIT.2022.3208653}.

\bibitem[Parhi and Nowak(2023{\natexlab{b}})]{parhi2023deep}
Rahul Parhi and Robert~D. Nowak.
\newblock Deep learning meets sparse regularization: A signal processing perspective.
\newblock \emph{IEEE Signal Processing Magazine}, 40\penalty0 (6):\penalty0 63--74, 2023{\natexlab{b}}.
\newblock \doi{10.1109/MSP.2023.3286988}.

\bibitem[Parhi and Unser(2024)]{parhi2024distributional}
Rahul Parhi and Michael Unser.
\newblock Distributional extension and invertibility of the {$k$}-plane transform and its dual.
\newblock \emph{SIAM Journal on Mathematical Analysis}, 56\penalty0 (4):\penalty0 4662--4686, 2024.
\newblock \doi{10.1137/23M1556721}.

\bibitem[Parhi and Unser(2025)]{Parhi2023FunctionSpaceOO}
Rahul Parhi and Michael Unser.
\newblock Function-space optimality of neural architectures with multivariate nonlinearities.
\newblock \emph{SIAM Journal on Mathematics of Data Science}, 7\penalty0 (1):\penalty0 110--135, 2025.
\newblock \doi{10.1137/23M1620971}.

\bibitem[Petrini et~al.(2023)Petrini, Cagnetta, Vanden-Eijnden, and Wyart]{Petrini_2023}
Leonardo Petrini, Francesco Cagnetta, Eric Vanden-Eijnden, and Matthieu Wyart.
\newblock Learning sparse features can lead to overfitting in neural networks*.
\newblock \emph{Journal of Statistical Mechanics: Theory and Experiment}, 2023\penalty0 (11):\penalty0 114003, nov 2023.
\newblock \doi{10.1088/1742-5468/ad01b9}.
\newblock URL \url{https://dx.doi.org/10.1088/1742-5468/ad01b9}.

\bibitem[Ramm and Katsevich(1996)]{RammRadonBook}
Alexander~G. Ramm and Alexander~I. Katsevich.
\newblock \emph{The {R}adon transform and local tomography}.
\newblock CRC Press, Boca Raton, FL, 1996.
\newblock ISBN 0-8493-9492-9.

\bibitem[Rotskoff and Vanden-Eijnden(2022)]{rotskoff2022trainability}
Grant Rotskoff and Eric Vanden-Eijnden.
\newblock Trainability and accuracy of artificial neural networks: An interacting particle system approach.
\newblock \emph{Communications on Pure and Applied Mathematics}, 75\penalty0 (9):\penalty0 1889--1935, 2022.

\bibitem[Shankar et~al.(2020)Shankar, Fang, Guo, Fridovich-Keil, Ragan-Kelley, Schmidt, and Recht]{shankar2020neural}
Vaishaal Shankar, Alex Fang, Wenshuo Guo, Sara Fridovich-Keil, Jonathan Ragan-Kelley, Ludwig Schmidt, and Benjamin Recht.
\newblock Neural kernels without tangents.
\newblock In \emph{International conference on machine learning}, pages 8614--8623. PMLR, 2020.

\bibitem[Sirignano and Spiliopoulos(2020)]{sirignano2020mean}
Justin Sirignano and Konstantinos Spiliopoulos.
\newblock Mean field analysis of neural networks: A law of large numbers.
\newblock \emph{SIAM Journal on Applied Mathematics}, 80\penalty0 (2):\penalty0 725--752, 2020.

\bibitem[Steinwart and Christmann(2008)]{steinwart_svm}
Ingo Steinwart and Andreas Christmann.
\newblock \emph{Support Vector Machines}.
\newblock Springer Publishing Company, Incorporated, 1st edition, 2008.
\newblock ISBN 0387772413.

\bibitem[Steinwart et~al.(2009)Steinwart, Hush, and Scovel]{Steinwart2009OptimalRF}
Ingo Steinwart, Don~R. Hush, and Clint Scovel.
\newblock Optimal rates for regularized least squares regression.
\newblock In \emph{Annual Conference Computational Learning Theory}, 2009.
\newblock URL \url{https://api.semanticscholar.org/CorpusID:7741716}.

\bibitem[Szeg{\H{o}}(1975)]{szegő1975orthogonal}
G.~Szeg{\H{o}}.
\newblock \emph{Orthogonal Polynomials}.
\newblock American Math. Soc: Colloquium publ. American Mathematical Society, 1975.
\newblock ISBN 9780821810231.
\newblock URL \url{https://books.google.com/books?id=ZOhmnsXlcY0C}.

\bibitem[{von Luxburg} and Bousquet(2004)]{luxberg_curse}
Ulrike {von Luxburg} and Olivier Bousquet.
\newblock Distance--based classification with lipschitz functions.
\newblock \emph{J. Mach. Learn. Res.}, 5:\penalty0 669–695, December 2004.
\newblock ISSN 1532-4435.

\bibitem[Yehudai and Shamir(2019)]{yehudai2019power}
Gilad Yehudai and Ohad Shamir.
\newblock On the power and limitations of random features for understanding neural networks.
\newblock \emph{Advances in neural information processing systems}, 32, 2019.

\bibitem[Zhou(2003)]{zhou_embed}
Ding-Xuan Zhou.
\newblock Capacity of reproducing kernel spaces in learning theory.
\newblock \emph{IEEE Transactions on Information Theory}, 49\penalty0 (7):\penalty0 1743--1752, 2003.
\newblock \doi{10.1109/TIT.2003.813564}.

\end{thebibliography}
 \iftoggle{longversion}{

\setcounter{tocdepth}{-5}  
\addtocontents{toc}{\protect\setcounter{tocdepth}{2}}  

\renewcommand{\contentsname}{A Gap Between the Gaussian RKHS and Neural Networks:
Supplementary Materials}
\tableofcontents

\appendix






\section{$\mathcal{R}\mathrm{TV}^2$ for centers size $k > 1$}\label{app: explicit}
In this Appendix, we provide the proof of \thmref{thm: rtv}.
First, we provide the proof for one center and then extend it to multi-center settings.

\begin{proof}
Let
\begin{align}
    g(\bm{x}) = \frac{1}{(2\pi)^{d/2}} \exp\paren{-\frac{\norm{\bm{x} - \bm{x}_0}_\mathbf{M}^2}{2}}.
\end{align}
Also, define the $\bm{0}$ mean identity covariance Gaussian
\begin{align}
    g_0(\bm{x}) = \frac{1}{(2\pi)^{d/2}} \exp\paren{-\frac{\norm{\bm{x}}^2}{2}}.
\end{align}
If we can write $\mathbf{M} = \mathbf{L}^\mathsf{T}\mathbf{L}$, then we have that
\begin{align}
\norm{\bm{x} - \bm{x}_0}_\mathbf{M}^2 = \norm{\mathbf{L}(\bm{x} - \bm{x}_0)}^2,
\end{align}
in which case
\begin{align}
g(\bm{x}) = \frac{1}{(2\pi)^{d/2}} \exp\paren{-\frac{\norm{\mathbf{L}\bm{x} - \mathbf{L}\bm{x}_0}^2}{2}}.
\end{align}
We have the Fourier transform
\begin{align}
    \hat{g}_0(\bm{\omega}) = \exp\paren{-\frac{\norm{\bm{\omega}}^2}{2}}.
\end{align}
We have the equality $g(\bm{x}) = g_0(\mathbf{L}\bm{x} - \mathbf{L}\bm{x}_0)$. Using the change of variables formula for the Fourier transform, we have
\begin{align}
    \hat{g}(\bm{\omega})
    &= \exp\paren{-\mathrm{i}(\mathbf{L}\bm{x}_0)^\mathsf{T}\mathbf{L}^{-\mathsf{T}}\bm{\omega}} \frac{1}{\abs{\det \mathbf{L}}} \hat{g}_0(\mathbf{L}^{-\mathsf{T}} \bm{\omega}) \\
    &= \exp\paren{-\mathrm{i}\bm{x}_0^\mathsf{T}\mathbf{L}^\mathsf{T}\mathbf{L}^{-\mathsf{T}}\bm{\omega}} \frac{1}{\abs{\det \mathbf{L}}} \exp\paren{-\frac{\norm{\mathbf{L}^{-\mathsf{T}} \bm{\omega}}^2}{2}} \\
    &= \exp\paren{-\mathrm{i}\bm{x}_0^\mathsf{T}\bm{\omega}} \frac{1}{\abs{\det \mathbf{L}}} \exp\paren{-\frac{\norm{\mathbf{L}^{-\mathsf{T}} \bm{\omega}}^2}{2}}. \label{eq: firstfour}
\end{align}
The Fourier slice theorem~\citep{RammRadonBook} says that
\begin{align}
    \mathcal{F}_1\{\mathcal{R}\{f\}(\bm{\beta}, \cdot)\}(\omega) = \hat{f}(\omega\bm{\beta}).
\end{align}
If we evaluate $ \hat{g}$ at $\bm{\omega} = \omega\bm{\beta}$ in the \eqnref{eq: firstfour}, we find
\begin{align}
\hat{g}(\omega\bm{\beta})
&= \exp\paren{-\mathrm{i}\bm{x}_0^\mathsf{T}(\omega\bm{\beta})} \frac{1}{\abs{\det \mathbf{L}}} \exp\paren{-\frac{\norm{\mathbf{L}^{-\mathsf{T}} (\omega\bm{\beta})}^2}{2}} \\
&= \exp\paren{-\mathrm{i}(\bm{x}_0^\mathsf{T}\bm{\beta})\omega} \frac{1}{\abs{\det \mathbf{L}}} \exp\paren{-\frac{\abs{\omega}^2\norm{\mathbf{L}^{-\mathsf{T}} \bm{\beta}}^2}{2}}.
\end{align}
The 1D inverse Fourier transform of this is the Radon transform of $g$, i.e.,
\begin{align}
    \mathcal{R}\{g\}(\bm{\beta}, t) = \frac{1}{\abs{\det \mathbf{L}}} \frac{1}{\sqrt{2\pi}} \frac{1}{\sqrt{\norm{\mathbf{L}^{-\mathsf{T}}\bm{\beta}}^2}} \exp\paren{-\frac{(t- \bm{x}_0^\mathsf{T}\bm{\beta})^2}{2\norm{\mathbf{L}^{-\mathsf{T}}\bm{\beta}}^2}}.
\end{align}
If $d$ is odd, then the second-order Radon domain total variation is the $L_1$-norm of $(d+1)$ derivatives in $t$ of this quantity (see Equation (28) in \cite{Parhi2020BanachSR}). That is
\begin{align}
    \mathcal{R}\mathrm{TV}^2(g)
    &= \frac{1}{\abs{\det \mathbf{L}}} \frac{1}{\sqrt{2\pi}} \int_{\mathbb{S}^{d-1}} \int_\mathbb{R}\abs{\frac{1}{\sqrt{\norm{\mathbf{L}^{-\mathsf{T}}\bm{\beta}}^2}} \paren{\frac{\partial^{d+1}}{\partial t^{d+1}} \exp\paren{-\frac{(t-\bm{x}_0^\mathsf{T}\bm{\beta})^2}{2\norm{\mathbf{L}^{-\mathsf{T}}\bm{\beta}}^2}}}} \,\mathrm{d}t\,\mathrm{d}\bm{\beta} \\ 
    &= \frac{1}{\abs{\det \mathbf{L}}} \frac{1}{\sqrt{2\pi}} \int_{\mathbb{S}^{d-1}} \frac{1}{{\norm{\mathbf{L}^{-\mathsf{T}}\bm{\beta}}}} 
    \int_\mathbb{R}
    \abs{ \paren{\frac{\partial^{d+1}}{\partial t^{d+1}} \exp\paren{-\frac{(t-\bm{x}_0^\mathsf{T}\bm{\beta})^2}{2\norm{\mathbf{L}^{-\mathsf{T}}\bm{\beta}}^2}}}} \,\mathrm{d}t\,\mathrm{d}\bm{\beta}. 
\end{align}
This gives the stated expression on $\rtv{f}$ for one center.
\end{proof}

\subsection{Multi-Center Computation}

For a kernel machine with $k > 1$ centers, we can rewrite $g$ as
\begin{align}
g(\bm{x}) = \sum_{i =1}^k \frac{1}{(2\pi)^{d/2}}  \alpha_i \cdot \exp\paren{-\frac{\norm{\mathbf{L}\bm{x} - \mathbf{L}\bm{x}_i}^2}{2}}.
\end{align}
Denote by $g_i(\bm{x}) := \frac{1}{(2\pi)^{d/2}} \exp\paren{-\frac{\norm{\mathbf{L}\bm{x} - \mathbf{L}\bm{x}_i}^2}{2}}$ for each center $\bm{x}_i \in \cD$.

Now, the Fourier transform of $g$ can be written for the extended case, noting the linearity of the transform,
\begin{align}
 \hat{g}(\bm{\omega}) = \sum_{i=1}^k \hat{g}_i(\bm{\omega}).
\end{align}
This implies that 
\begin{align}
 \hat{g}(\bm{\omega}) = \sum_{i =1}^k \exp\paren{-\mathrm{i}\bm{x}_i^\mathsf{T}\bm{\omega}} \frac{1}{\abs{\det \mathbf{L}}} \exp\paren{-\frac{\norm{\mathbf{L}^{-\mathsf{T}} \bm{\omega}}^2}{2}}.
\end{align}
Now, computing the inverse Fourier transform of $\hat{g}$ wrt $\bm{\omega}$ gives 
\begin{align}
\mathcal{R}\{g\}(\bm{\beta}, t) = \frac{1}{\abs{\det \mathbf{L}}} \sum_{i=1}^k \alpha_i \frac{1}{\sqrt{2\pi}} \frac{1}{\sqrt{\norm{\mathbf{L}^{-\mathsf{T}}\bm{\beta}}^2}} \exp\paren{-\frac{(t- \bm{x}_i^\mathsf{T}\bm{\beta})^2}{2\norm{\mathbf{L}^{-\mathsf{T}}\bm{\beta}}^2}}.
\end{align}
As before the $\mathcal{R}\mathrm{TV}^2$ of $g$, i.e. the second-order Radon domain total variation for odd values of $d$ is the $L_1$-norm of $(d+1)$ derivatives in $t$ of this quantity. Thus,
\begin{align}
\mathcal{R}\mathrm{TV}^2(g) = \frac{1}{\abs{\det \mathbf{L}}} \frac{1}{\sqrt{2\pi}} \int_{\mathbb{S}^{d-1}} \frac{1}{{\norm{\mathbf{L}^{-\mathsf{T}}\bm{\beta}}}} 
    \int_\mathbb{R}
    \abs{\sum_{i=1}^k \alpha_i \paren{\frac{\partial^{d+1}}{\partial t^{d+1}} \exp\paren{-\frac{(t-\bm{x}_i^\mathsf{T}\bm{\beta})^2}{2\norm{\mathbf{L}^{-\mathsf{T}}\bm{\beta}}^2}}}} \,\mathrm{d}t\,\mathrm{d}\bm{\beta}.
\end{align}

This expression can be extended to the case of kernel machines with infinite centers by taking limits. In particular, it is guaranteed to be finite for the case when the $\ell_1$ norm of the coefficients $\alpha$ is finite since
\begin{align}
    \abs{\sum_{i} \alpha_i \paren{\frac{\partial^{d+1}}{\partial t^{d+1}} \exp\paren{-\frac{(t-\bm{x}_i^\mathsf{T}\bm{\beta})^2}{2\norm{\mathbf{L}^{-\mathsf{T}}\bm{\beta}}^2}}}} \le C\cdot \sum_{i}|\alpha_i|,
\end{align}
where it is straightforward to show that $\frac{\partial^{d+1}}{\partial t^{d+1}} \exp\paren{-\frac{(t-\bm{x}_i^\mathsf{T}\bm{\beta})^2}{2\norm{\mathbf{L}^{-\mathsf{T}}\bm{\beta}}^2}}$ is bounded by a universal constant $C > 0$ for all choices of $\bm{x}_i$.

\section{Change of Variables For Multiple Centers}\label{app: cov}
In this Appendix, we provide the proof of \lemref{lem: cov}.

\begin{proof}
    Previously, we computed the $\mathcal{R}\mathrm{TV}^2$ of a general kernel machine as
    \begin{align}
\mathcal{R}\mathrm{TV}^2(g) = \frac{1}{\abs{\det \mathbf{L}}} \frac{1}{\sqrt{2\pi}} \int_{\mathbb{S}^{d-1}} \frac{1}{{\norm{\mathbf{L}^{-\mathsf{T}}\bm{\beta}}}} 
    \int_\mathbb{R}
    \abs{\sum_{i=1}^k \alpha_i \paren{\frac{\partial^{d+1}}{\partial t^{d+1}} \exp\paren{-\frac{(t-\bm{x}_i^\mathsf{T}\bm{\beta})^2}{2\norm{\mathbf{L}^{-\mathsf{T}}\bm{\beta}}^2}}}} \,\mathrm{d}t\,\mathrm{d}\bm{\beta}.
\end{align}
Now, we can rewrite the ($d+1$)-th derivative of the involved exponential as follows
\begin{equation*}
\frac{\partial^{d+1}}{\partial t^{d+1}} \exp\left( -\frac{(t - a_i)^2}{2 \sigma^2} \right) = (-1)^{d+1} \sigma^{-(d+1)} H_{d+1}\left( \frac{t - a_i}{\sigma} \right) \exp\left( -\frac{(t - a_i)^2}{2 \sigma^2} \right),
\end{equation*}
where we define
\begin{align}
a_i = \bm{x}_i^\mathsf{T}\bm{\beta} \quad\text{and}\quad \sigma = \|\mathbf{L}^{-\mathsf{T}}\bm{\beta}\|.
\end{align}
Substituting the expression for Hermite polynomial (see \secref{sec: setup}) into the integral $I$ gives
\begin{align}
I &= \frac{1}{|\det \mathbf{L}| \sqrt{2\pi}} \int_{\mathbb{S}^{d-1}} \frac{1}{\sigma^{d+2}} \int_{\mathbb{R}} \left| \sum_{i=1}^k \alpha_i H_{d+1}\left( \frac{t - a_i}{\sigma} \right) e^{ -\frac{(t - a_i)^2}{2 \sigma^2} } \right| dt \, d\bm{\beta}. \nonumber
\end{align}
To simplify the inner integral, we can perform the following change of variable centered at \( a_1 = \bm{x}_1^\mathsf{T}\bm{\beta} \)
\begin{align}
y = \frac{t - a_1}{\sigma} \quad \Rightarrow \quad t = \sigma y + a_1 \quad \Rightarrow \quad dt = \sigma \, dy.
\end{align}
We can express all \( y_i \) in terms of \( y \) as follows
\begin{align}
y_i = \frac{t - a_i}{\sigma} = \frac{\sigma y + a_1 - a_i}{\sigma} = y + \Delta_i,
\end{align}
where we define
\begin{align}
\Delta_i = \frac{a_1 - a_i}{\sigma} = \frac{\bm{x}_1^\mathsf{T}\bm{\beta} - \bm{x}_i^\mathsf{T}\bm{\beta}}{\|\mathbf{L}^{-\mathsf{T}}\bm{\beta}\|}
\quad \text{for } i = 2, 3, \ldots, k,
\end{align}
and \( \Delta_1 = 0 \).

With this change of variable into the inner integral we have the stated final form 
\begin{align}
I &= \frac{1}{|\det \mathbf{L}| \sqrt{2\pi}} \int_{\mathbb{S}^{d-1}} \frac{1}{\sigma^{d+1}} \underbrace{\int_{\mathbb{R}} \left| \sum_{i=1}^k \alpha_i H_{d+1}\left( y + \Delta_i \right) e^{ -\frac{(y + \Delta_i)^2}{2} } \right| dy }_{\text{define $I_{\sf{inner}}$}}\, d\bm{\beta}
\end{align}
\begin{align}I_{\sf{inner}} = \int_{\reals} \left| \sum_{i=1}^k \alpha_i \, H_{d+1}\left( y + \Delta_i \right) \, e^{ -\frac{(y + \Delta_i)^2}{2} } \right| \, dy.
\end{align}
\end{proof}

\section{A Useful Property of Hermite Polynomials}\label{app: useful}
In this Appendix, we provide the proof of \lemref{lem: inter}.

\begin{proof}
Since \( H_{d+1}(y) \) is a polynomial of degree \( d+1 \), there exists a constant \( C > 0 \) (depending only on \( d \)) such that
\begin{align}
\Bigl| H_{d+1}(y) \Bigr| \le C \,(1+|y|)^{d+1} \quad \text{for all } y \in \mathbb{R}.
\end{align}
Hence, for any \(\delta>0\) and any integer \( j\ge 2 \) we have
\begin{align}
\Bigl| H_{d+1}\bigl(j\delta\bigr) \Bigr|\, e^{-\frac{(j\delta)^2}{2}}
\le C \,(1+j\delta)^{d+1}\, e^{-\frac{(j\delta)^2}{2}}.
\end{align}
Now, we define
\begin{align}
S(\delta) := \sum_{j=2}^\infty (1+j\delta)^{d+1}\, e^{-\frac{(j\delta)^2}{2}}.
\end{align}
Note that we we can upper bound as follows
\begin{align}
\sum_{j=2}^\infty \Bigl| H_{d+1}\bigl(j\delta\bigr) \Bigr|\, e^{-\frac{(j\delta)^2}{2}}
\le C\, S(\delta).
\end{align}
For each fixed \( j \ge 2 \), notice that $(1+j\delta)^{d+1}\, e^{-\frac{(j\delta)^2}{2}}$
decays exponentially in \( j \) (since the exponential term \( e^{-\frac{(j\delta)^2}{2}} \) dominates the polynomial growth of \((1+j\delta)^{d+1}\)). Moreover, for fixed \( j\ge2 \) we have
\begin{align}
\lim_{\delta\to\infty} (1+j\delta)^{d+1}\, e^{-\frac{(j\delta)^2}{2}} = 0.
\end{align}
Thus, the series \( S(\delta) \) converges for every fixed \(\delta > 0\) and
\begin{align}
\lim_{\delta\to\infty} S(\delta) = 0.
\end{align}
Hence, by the definition, there exists some \(\delta_0 > 0\) such that for all \(\delta \ge \delta_0\) we have
\begin{align}
S(\delta) < \frac{\rho}{4C}.
\end{align}
It follows that for every \(\delta \ge \delta_0\),
\begin{align}
\sum_{j=2}^\infty \Bigl| H_{d+1}\bigl(j\delta\bigr) \Bigr|\, e^{-\frac{(j\delta)^2}{2}}
\le C\cdot S(\delta) < C\cdot \frac{\rho}{4C} = \frac{\rho}{4}.
\end{align}

Thus for this choice of $\delta_0$ we achieve the statement of the lemma.
\end{proof}
\section{A Sequence With Diverging $\ell_1$-Norm and Converging RKHS Norm}\label{app: diverging}
In this Appendix, we provide the proof of \exmref{exam: divergence}.

\begin{lemma}
Let $\alpha_n = \frac{1}{n}$ and suppose that the points $\bm{x}_n \in \mathbb{R}^d$ satisfy 
\begin{align}
\|\bm{x}_i-\bm{x}_j\| \ge |i-j|\delta \quad \text{for some } \delta>0 \text{ and for all } i,j \in \mathbb{N}.
\end{align}
Then the function
\begin{align}
f(\bm{x})=\sum_{n=1}^\infty \frac{1}{n}\, k(\bm{x},\bm{x}_n),
\end{align}
with the Gaussian kernel 
\begin{align}
k(\bm{x},\bm{y})=\exp\Bigl(-\frac{\|\bm{x}-\bm{y}\|^2}{2\sigma^2}\Bigr),
\end{align}
has finite RKHS norm
\begin{align}
\|f\|_{\mathcal{H}}^2 = \sum_{i,j=1}^\infty \frac{1}{ij}\, k(\bm{x}_i,\bm{x}_j) < \infty,
\end{align}
even though
\begin{align}
\|\alpha\|_{\ell_1} = \sum_{n=1}^\infty \frac{1}{n} = \infty.
\end{align}
\end{lemma}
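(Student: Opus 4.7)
The plan is to show the double series $\|f\|_{\mathcal{H}}^2=\sum_{i,j=1}^\infty \frac{1}{ij}\,k(\bm{x}_i,\bm{x}_j)$ converges by separating the diagonal and off-diagonal contributions and exploiting the Gaussian decay induced by the separation hypothesis. First, I would split
\begin{align*}
\|f\|_{\mathcal{H}}^2 = \sum_{i=1}^\infty \frac{1}{i^2}\,k(\bm{x}_i,\bm{x}_i) + 2\sum_{1\le i<j}\frac{1}{ij}\,k(\bm{x}_i,\bm{x}_j).
\end{align*}
The diagonal part is immediate: $k(\bm{x}_i,\bm{x}_i)=1$, so it equals $\sum_{i=1}^\infty 1/i^2=\pi^2/6<\infty$. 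The work is therefore entirely in the off-diagonal part.

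For the off-diagonal terms, I would use the separation hypothesis $\|\bm{x}_i-\bm{x}_j\|\ge|i-j|\delta$, which by monotonicity of the Gaussian gives $k(\bm{x}_i,\bm{x}_j)\le \exp\bigl(-\tfrac{(j-i)^2\delta^2}{2\sigma^2}\bigr)$. Re-indexing by the gap $m=j-i\ge 1$ yields
\begin{align*}
2\sum_{1\le i<j}\frac{k(\bm{x}_i,\bm{x}_j)}{ij} \;\le\; 2\sum_{m=1}^\infty e^{-\frac{m^2\delta^2}{2\sigma^2}} \sum_{i=1}^\infty \frac{1}{i(i+m)}.
\end{align*}
The inner telescoping identity $\sum_{i=1}^\infty \frac{1}{i(i+m)}=\frac{1}{m}\sum_{i=1}^\infty\bigl(\frac{1}{i}-\frac{1}{i+m}\bigr)=\frac{H_m}{m}$, where $H_m=1+\tfrac12+\cdots+\tfrac1m$, gives an $O(\log m/m)$ growth. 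Substituting back reduces the off-diagonal bound to
\begin{align*}
2\sum_{m=1}^\infty \frac{H_m}{m}\,e^{-\frac{m^2\delta^2}{2\sigma^2}},
\end{align*}
which converges since the Gaussian factor dominates any polynomial or logarithmic prefactor.

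The only minor obstacle is justifying the term-by-term manipulations (absolute convergence to allow Fubini-style reordering by the gap index), but this follows from the positivity of every summand, so Tonelli applies. Combining the finite diagonal sum and the finite off-diagonal bound yields $\|f\|_{\mathcal{H}}^2<\infty$. Since $\|\alpha\|_{\ell_1}=\sum_n 1/n$ is the harmonic series, it diverges, completing the proof.
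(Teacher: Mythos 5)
Your proposal is correct and follows essentially the same route as the paper's proof: the identical diagonal/off-diagonal split, the same Gaussian bound $k(\bm{x}_i,\bm{x}_j)\le e^{-(i-j)^2\delta^2/(2\sigma^2)}$ from the separation hypothesis, the same re-indexing by the gap, and the same telescoping identity yielding $H_m/m$. Your explicit appeal to Tonelli to justify the reordering is a small added nicety the paper leaves implicit, but otherwise the arguments coincide.
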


\begin{proof}
We begin by splitting the double series defining the RKHS norm into diagonal and off--diagonal parts:
\begin{align}
\|f\|_{\mathcal{H}}^2 = \sum_{i=1}^\infty \frac{1}{i^2}\, k(\bm{x}_i,\bm{x}_i)
+\sum_{i\neq j} \frac{1}{ij}\, k(\bm{x}_i,\bm{x}_j).
\end{align}
Since $k(\bm{x},\bm{x})=1$ for all $\bm{x}\in\mathbb{R}^d$, the diagonal contribution is
\begin{align}
S_{\text{diag}} = \sum_{i=1}^\infty \frac{1}{i^2},
\end{align}
which converges (indeed, $\sum_{i=1}^\infty \frac{1}{i^2}=\pi^2/6$).

For the off--diagonal part, define
\begin{align}
S_{\text{off}} = \sum_{i\neq j} \frac{1}{ij}\, k(\bm{x}_i,\bm{x}_j).
\end{align}
By symmetry and non-negativity of $k(\bm{x}_i,\bm{x}_j)$, we can write
\begin{align}
S_{\text{off}} = 2\sum_{i>j} \frac{1}{ij}\, k(\bm{x}_i,\bm{x}_j).
\end{align}
For $i>j$, the separation condition implies
\begin{align}
\|\bm{x}_i-\bm{x}_j\| \ge (i-j)\delta,
\end{align}
so that
\begin{align}
k(\bm{x}_i,\bm{x}_j) = \exp\Bigl(-\frac{\|\bm{x}_i-\bm{x}_j\|^2}{2\sigma^2}\Bigr)
\le \exp\Bigl(-\frac{((i-j)\delta)^2}{2\sigma^2}\Bigr).
\end{align}
Setting 
\begin{align}
k = i - j \quad (k\ge 1)
\end{align}
and writing $i=j+k$, we obtain
\begin{align}
S_{\text{off}} \le 2\sum_{k=1}^\infty \exp\Bigl(-\frac{(k\delta)^2}{2\sigma^2}\Bigr)
\sum_{j=1}^\infty \frac{1}{j(j+k)}.
\end{align}
We now analyze the inner sum. Using partial fractions,
\begin{align}
\frac{1}{j(j+k)} = \frac{1}{k}\Bigl(\frac{1}{j} - \frac{1}{j+k}\Bigr).
\end{align}
Thus,
\begin{align}
\sum_{j=1}^\infty \frac{1}{j(j+k)} = \frac{1}{k} \sum_{j=1}^\infty \left(\frac{1}{j} - \frac{1}{j+k}\right).
\end{align}
The telescoping sum yields
\begin{align}
H_k := \sum_{j=1}^\infty \left(\frac{1}{j} - \frac{1}{j+k}\right)
=\sum_{j=1}^{k} \frac{1}{j},
\end{align}
where $H_k$ is also known as the $k$th harmonic number. Hence,
\begin{align}
\sum_{j=1}^\infty \frac{1}{j(j+k)} = \frac{H_k}{k}.
\end{align}
It follows that
\begin{align}
S_{\text{off}} \le 2\sum_{k=1}^\infty \exp\Bigl(-\frac{(k\delta)^2}{2\sigma^2}\Bigr) \frac{H_k}{k}.
\end{align}
For large $k$, it holds that
\begin{align}
H_k = \ln k + \gamma + o(1),
\end{align}
where $\gamma$ is the Euler--Mascheroni constant. Moreover, the Gaussian factor 
\begin{align}
\exp\Bigl(-\frac{(k\delta)^2}{2\sigma^2}\Bigr)
\end{align}
decays exponentially in $k$. Therefore, the series
\begin{align}
\sum_{k=1}^\infty \exp\Bigl(-\frac{(k\delta)^2}{2\sigma^2}\Bigr) \frac{H_k}{k}
\end{align}
is bounded, where we note that $\tfrac{H_k}{k}$ is bounded from above by $1$ (for all $k$ taken sufficiently large). 
Combining the diagonal and off--diagonal parts, we deduce that

\begin{align}
    \|f\|_{\mathcal{H}}^2 & =  \sum_{i=1}^\infty \frac{1}{i^2}\, k(\bm{x}_i,\bm{x}_i)
+\sum_{i\neq j} \frac{1}{ij}\, k(\bm{x}_i,\bm{x}_j) \\
& = S_{\text{diag}} + S_{\text{off}} \\
& \le \sum_{i=1}^\infty \frac{1}{i^2} + 2\sum_{k=1}^\infty \exp\Bigl(-\frac{(k\delta)^2}{2\sigma^2}\Bigr) \frac{H_k}{k} \\
& < \infty.
\end{align}
\end{proof}

}
{}
\end{document}